\documentclass[a4paper]{article}

\addtolength{\evensidemargin}{-2cm}
\addtolength{\oddsidemargin}{-2cm}
\addtolength{\textwidth}{4cm}
\addtolength{\topmargin}{-2cm}
\addtolength{\textheight}{3,5cm}

\usepackage{times}
\usepackage{graphicx} 
\usepackage{subfigure} 

\usepackage{natbib}

\usepackage{algorithm}
\usepackage{algorithmic}

\usepackage{url}            
\usepackage{booktabs}       
\usepackage{amsfonts}       
\usepackage{nicefrac}       
\usepackage{microtype}      
\usepackage{dsfont}

\usepackage{amsmath,amssymb,amsthm}
\usepackage{bbm}
\usepackage{graphicx}
\usepackage{color}
\usepackage{bbm}
\usepackage{multirow}
\usepackage{url}
\usepackage{balance}
\usepackage{mathtools}
\usepackage{arydshln}

\usepackage{pgfplots}
\usepackage{tikz}
\usepackage{standalone}

\bibliographystyle{plainnat}

\newcommand{\hyperprior}{\pi}
\newcommand{\hyperposterior}{\rho}
\newcommand{\hyperpostS}{\rho_S}
\newcommand{\prior}{P}
\newcommand{\posterior}{Q}
\newcommand{\postS}{{Q_S}}
\newcommand{\postvS}{{Q_{v,S}}}
\newcommand{\postvStwo}{{Q_{v,S}^2}}
\newcommand{\postStwo}{{Q_S^2}}

\newcommand{\mv}{\textsc{mv}} 

\newcommand{\KL}{\operatorname{KL}}
\newcommand{\kl}{\operatorname{kl}}


\newcommand{\Ind}[1]{\mathds{1}_{#1}}

\newcommand{\eq}{e_{\Dcal}(Q)}
\newcommand{\dq}{d_{\Dcal}(Q)}
\newcommand{\emv}{e_{\Dcal}^{\mv}(\rho)}
\newcommand{\dmv}{d_{\Dcal}^{\mv}(\rho)}

\newcommand{\emvsl}{e_{S_l}^{\mv}(\rho)}
\newcommand{\dmvsu}{d_{S_u}^{\mv}(\rho)}
\newcommand{\emvs}{e_{S}^{\mv}(\rho)}
\newcommand{\dmvs}{d_{S}^{\mv}(\rho)}

\newcommand{\dmvS}{d_{\Dcal}^{\mv}(\rho_S)}
\newcommand{\emvsS}{e_{S}^{\mv}(\rho_S)}
\newcommand{\dmvsS}{d_{S}^{\mv}(\rho_S)}

\newcommand{\xbf}{{\bf x}}

\newcommand{\D}{{\cal D}}
\newcommand{\Dcal}{{\cal D}}
\newcommand{\Xcal}{{\cal X}}
\newcommand{\Ycal}{{\cal Y}}

\newcommand{\Hcal}{{\cal H}}  
\newcommand{\Vcal}{{\cal V}}  


\newcommand{\concatSVM}{\texttt{Concat}_{\texttt{SVM}}}

\newcommand{\mono}{\texttt{Mono}_v}

\newcommand{\AggregU}{\texttt{Aggreg}_{\texttt{P}}}
\newcommand{\AggregUL}{\texttt{Aggreg}_{\texttt{L}}}

\newcommand{\AggregSVMall}{\texttt{Fusion}_{\texttt{SVM}}^{\texttt{all}}}

\newcommand{\AggregCqall}{\texttt{Fusion}_{\texttt{Cq}}^{\texttt{all}}}

\DeclareMathOperator*{\Esp}{\mathbb{E}} 
\newcommand{\E}[1]{{\displaystyle \Esp_{#1}}}
\newcommand{\Eclap}[1]{\E{\mathclap{#1}}}

\newcommand{\sign}{\operatorname{sign}}

\DeclareMathOperator*{\argmin}{\mathrm{argmin}}


\newcommand{\eg}{{\em e.g.\/}}

\newtheorem{theorem}{Theorem}
\newtheorem{lemma}{Lemma}
\newtheorem{cor}{Corollary}

\usepackage{fancyhdr}
\usepackage{fancybox}
\pagestyle{fancy}

\newcommand{\ver}{3}
\lhead{Goyal, Morvant, Germain, Amini}
\rhead{PAC-Bayes for a two-step Hierarchical Multiview Learning Approach}
\rfoot[Technical Report V \ver]{\thepage} 
\cfoot{} 
\lfoot[\thepage]{Technical Report V \ver}

\title{PAC-Bayesian Analysis for a two-step Hierarchical Multiview Learning Approach} 
\author{Anil Goyal$^{1,2}$ \and Emilie Morvant$^1$ \and Pascal Germain$^3$ \and  Massih-Reza Amini$^2$ \and 
	$\mbox{}^1$ \small Univ Lyon, UJM-Saint-Etienne, CNRS, Institut d'Optique Graduate School, \\ \small  Laboratoire Hubert Curien UMR 5516, F-42023, Saint-Etienne, France
	\and
	$\mbox{}^2$ \small Univ. Grenoble Alps, Laboratoire d'Informatique de Grenoble, AMA, \\ \small Centre Equation 4, BP 53, F-38041 Grenoble Cedex 9, France
	\and
	$\mbox{}^3$ \small 	D\'epartement d'informatique de l'ENS, \'Ecole Normale Sup\'erieure, \\ \small CNRS, PSL Research University, 75005 Paris, France \\ \small INRIA
}

\begin{document} 

	\maketitle
	
	\begin{abstract} 
		We study a two-level multiview learning with more than two views under the PAC-Bayesian framework. This approach, sometimes referred as late fusion, consists in learning sequentially multiple view-specific classifiers at the first level, and then combining these view-specific classifiers at the second level. Our main theoretical result is a generalization bound on the risk of the majority vote which exhibits a term of diversity in the predictions of the view-specific classifiers. From this result it comes out that controlling the trade-off between diversity and accuracy is a key element for multiview learning, which complements other results in multiview learning. Finally, we experiment our principle on multiview datasets extracted from the Reuters RCV1/RCV2 collection.

	\end{abstract} 
	
	\section{Introduction}
\label{sec:intro}
With the ever-increasing observations produced by more than one source, multiview learning has been expanding over the past decade, spurred by the seminal work of \citet{blum98} on co-training. 
Most of the existing methods try to combine
 multimodal information, either by directly merging the views or by combining models learned from the different views\footnote{The fusion of descriptions, {\it resp.} of models, is sometimes called Early Fusion, {\it resp.} Late Fusion.}~\citep{Early-Late-ACMMultimedia05}, in order to produce a model more reliable for the considered task.  
Our goal is to propose a theoretically grounded criteria to ``correctly'' combine the views.
With this in mind we propose to study multiview learning through the PAC-Bayesian framework (introduced in~\citep{McAllester99}) that allows to derive generalization bounds for models that are expressed as a combination over a set of voters. 
When faced with learning from one view, the PAC-Bayesian theory assumes a prior distribution over the voters involved in the combination, and aims at learning---from the learning sample---a posterior distribution that leads to a well-performing combination expressed as a weighted majority vote.
In this paper we extend the PAC-Bayesian theory to multiview with more than two views.
Concretely, given a set of view-specific classifiers, we define a hierarchy of posterior and prior distributions over the views, such that {\it(i)} for each view $v$, we consider prior $\prior_v$ and posterior $\posterior_v$ distributions over each view-specific voters' set, and {\it (ii)} a prior~$\hyperprior$ and a posterior~$\hyperposterior$ distribution over the set of views (see Figure~\ref{fig:MultiviewHierarchy}), respectively called hyper-prior and hyper-posterior\footnote{Our notion of hyper-prior and hyper-posterior distributions is different than the one proposed for lifelong learning~\citep{PentinaL14}, where they basically consider hyper-prior and hyper-posterior over the set of possible priors: The prior distribution $P$ over the voters' set is viewed as a random variable.}. 
In this way, our proposed approach encompasses the one of \citet{Massih09} that considered uniform distribution to combine the view-specific classifiers' predictions.
Moreover, compared to the PAC-Bayesian work of \citet{SunS14}, we are interested here to the more general and natural case of multiview learning with more than two views. 
Note also that \citet{lecue2014} proposed a non-PAC-Bayesian theoretical analysis of a combination of voters (called \mbox{$Q$-Aggregation}) that is able to take into account  a prior and a posterior distribution but in a single-view setting.
\begin{figure}[t!]
	\label{fig:MultiviewHierarchy}\centering
	\includegraphics[scale=0.4]{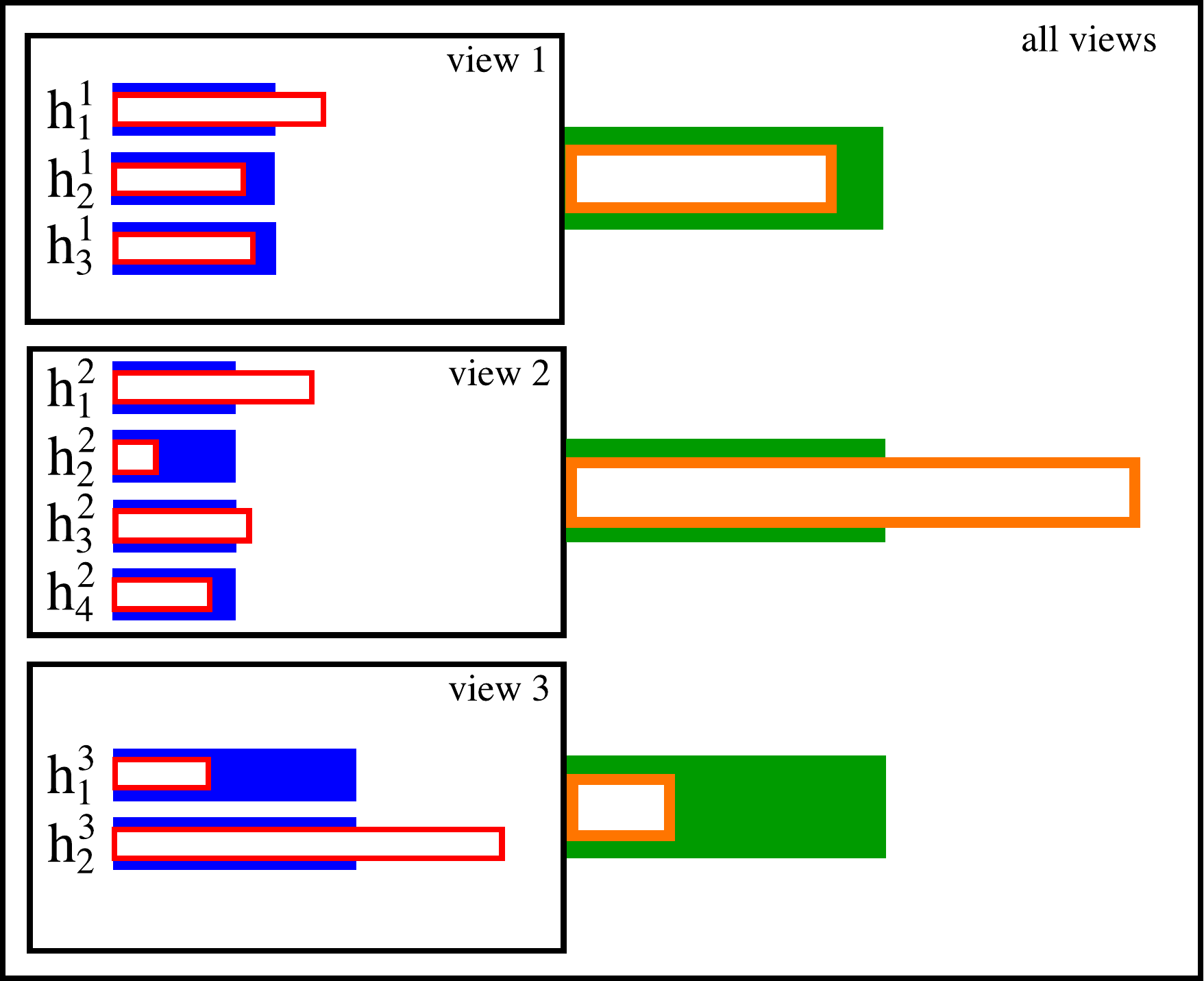}
	\caption{Example of the multiview distributions hierarchy with $3$ views. For all views $v\in\{1,2,3\}$, we have a set of voters  $\Hcal_v=\{h_1^v,\ldots,h_{n_v}^v\}$ on which we consider prior $\prior_v$ view-specific distribution (in blue), and  we consider a hyper-prior~$\hyperprior$ distribution (in green) over the set of $3$ views. The objective is to learn a posterior $\posterior_v$ (in red) view-specific distributions and a hyper-posterior $\hyperposterior$ distribution (in orange) leading to a good model. The length of a rectangle  represents the weight (or probability) assigned to a voter or a view.}
\end{figure}

Our theoretical study also includes a notion of disagreement between all the voters, allowing to take into account a notion of diversity between them which is known as a key element in multiview learning~\citep{Kuncheva,chapelle2006semi,Maillard09,Massih09}. 
Finally, we empirically evaluate a two-level learning approach on the Reuters RCV1/RCV2 corpus to show that our analysis is sound.

In the next section, we recall the general PAC-Bayesian setup, and present PAC-Bayesian \emph{expectation bounds}---while most of the usual PAC-Bayesian bounds are \emph{probabilistic bounds}. 
In Section~\ref{sec:MVPB-Bound}, we then discuss the problem of multiview learning, 
adapting the PAC-Bayesian expectation bounds to the specificity of the two-level multiview approach. In Section~\ref{sec:ComparativeStudy}, we discuss the relation between our analysis and previous works.
Before concluding in Section~\ref{sec:conclu}, we present experimental results obtained on a collection of the Reuters RCV1/RCV2 corpus in Section~\ref{sec:experiments}.

	\section{The Single-View PAC-Bayesian Theorem}
\label{sec:PAC-Bayes}

In this section, we state a {\it new} general mono-view PAC-Bayesian theorem, inspired by the work of \citet{GermainLLMR15}, that we extend to multiview learning in Section~\ref{sec:MVPB-Bound}.
\subsection{Notations and Setting}
We consider binary classification tasks on data drawn from a fixed yet unknown distribution $\D$ over $\Xcal \times \Ycal$, where $\mathcal{X}  \subseteq  \mathbb{R}^d$ is a \mbox{$d$-dimensional} input space and $\mathcal{Y} = \{-1,+1\}$ the label/output set. 
A learning algorithm is provided with a training sample of $m$ examples denoted by $S=\{ (x_i,y_i ) \}_{i=1}^{m} \in  (\Xcal \times \Ycal)^m$, that is assumed to be independently and identically distributed ({\it i.i.d.}) according to $\D$. 
The notation $\D^m$ stands for the distribution of such a \mbox{$m$-sample}, and $\D_\Xcal$ for the marginal distribution on $\Xcal$. 
We consider a set $\Hcal$ of classifiers or voters such that $\forall h\in\Hcal,\ h:\Xcal \to\Ycal$. 
In addition, PAC-Bayesian approach requires a prior distribution $\prior$ over $\Hcal$ that models {\it a priori} belief on the voters from $\Hcal$ before the observation of the learning sample $S$.
Given $S\sim\D^m$, the learner objective is then to find 
a posterior distribution $\posterior$ over $\Hcal$ leading to an accurate \mbox{$\posterior$-weighted} majority vote $B_Q(x)$ defined as
\begin{align*}
B_Q(x) = \sign \left[ \E{ h \sim \posterior} h(x)\right].
\end{align*} 
In other words, one wants to learn $\posterior$ over $\Hcal$ such that it minimizes the true risk $R_{\D}(B_{Q})$ of $B_Q(x)$: 
 $$R_{\mathcal{D}}(B_{Q}) = \E{(x,y) \sim \mathcal{D}} \Ind{[B_{Q}(x) \neq y]}\,,$$
where $\Ind{[\pi]} =1$ if predicate $\pi$ holds, and $0$ otherwise.
However, a PAC-Bayesian generalization bound does not directly focus on the risk of the deterministic \mbox{$\posterior$-weighted} majority vote $B_{Q}$. 
Instead, it upper-bounds the risk of the stochastic Gibbs classifier $G_{Q}$, which predicts the label of an example $x$ by drawing $h$ from $\Hcal$ according to the posterior distribution $\posterior$ and predicts $h(x)$.
Therefore, the true risk $R_{D}(G_Q)$ of the Gibbs classifier on a data distribution $\D$, and its empirical risk $R_{S}(G_Q)$ estimated on a sample $S \sim  \D^m$  are respectively given by
\begin{align*}
 R_{\D}(G_Q) \ &=\  \E{(x,y) \sim \mathcal{D}} \ \E{h \sim \posterior} \Ind{[h(x) \neq y]}\,,\\
\mbox{ and }\quad
R_{S}(G_Q) \ &=\ \frac{1}{m} \sum_{i=1}^m \E{h \sim \posterior} \Ind{[h(x_i) \neq y_i]}\,.
\end{align*}
The above Gibbs classifier is closely related to the \mbox{$\posterior$-weighted} majority vote $B_{\posterior}$.
Indeed, if $B_{\posterior}$ misclassifies $x \in  \Xcal$, then at least half of the classifiers (under measure $\posterior$) make an error on $x$. 
Therefore, we have 
\begin{align}
\label{eq:gibbsrelation}
R_{\mathcal{D}}(B_Q) \leq 2R_{\mathcal{D}}(G_Q).
\end{align}
Thus, an upper bound on $R_\D(G_\posterior)$ gives rise to an upper bound on $R_\D(B_\posterior)$.
Other tighter relations exist \citep{LangfordS02,Lacasse06,GermainLLMR15}, such as the so-called \mbox{C-Bound} \citep{Lacasse06} that involves the \textit{expected disagreement} $\dq$ between all the pair of voters, and that can be expressed as follows (when $R_{\Dcal}(G_Q)\leq \frac12$):
\begin{align}
\label{eq:cbound}
&R_{\mathcal{D}}(B_{Q}) \ \leq\  1 - \frac{\displaystyle \left(1-2R_{\Dcal}(G_Q)\right)^2}{\displaystyle 1-2 \dq}\,,\\
\nonumber \mbox{ where}\quad &\dq  = \E{x \sim \D_\Xcal}\, \E{(h,h') \sim \posterior^2} \Ind{[h(x) {\neq} h'(x)]}\,.
\end{align}
Moreover, \citet{GermainLLMR15} have shown that the Gibbs classifier's risk can be rewritten in terms of $\dq$ and  \textit{expected joint error} $\eq$ between all the pair of voters as
\begin{align}
\label{eq:ed}
&R_{\mathcal{D}}(G_Q) \ =\  \frac{1}{2} \dq+\eq\,,\\
\nonumber\mbox{where}\quad &
  \eq\  =\ \E{(x,y) \sim \mathcal{D}}\  \E{(h,h') \sim Q^2}\ \Ind{[h(x) {\neq} y]}\, \Ind{[h'(x){\neq} y]}\,.
\end{align}
It is worth noting that from multiview learning standpoint where the notion of diversity among voters is known to be important~\citep{Massih09,Maillard09,SunS14,AtreyHEK10,Kuncheva}, Equations~\eqref{eq:cbound} and~\eqref{eq:ed} directly capture the trade-off between diversity and accuracy. 
Indeed, $\dq$ involves the diversity between voters~\citep{MorvantHA14}, while $\eq$ takes into account the errors.
Note that the principle of controlling the trade-off between diversity and accuracy through the C-bound of Equation~\eqref{eq:cbound} has been exploited by \citet{MinCQ} and \citet{cqboost} to derive well-performing PAC-Bayesian algorithms that aims at minimizing it. For our experiments in Section~\ref{sec:expe}, we make use of CqBoost~\citep{cqboost}---one of these algorithms---for multiview learning.
\\
Last but not least, PAC-Bayesian generalization bounds take into account the given prior distribution $\prior$ on $\Hcal$ through the Kullback-Leibler divergence between the learned posterior distribution $\posterior$ and  $\prior$: 
\begin{align*}
\KL(\posterior \| \prior)\ =\ \E{h \sim \posterior} \ln \frac{\posterior(h)}{\prior(h)}\,.
\end{align*}

\subsection{A New PAC-Bayesian Theorem as an Expected Risk Bound}
In the following we introduce a new variation of the general PAC-Bayesian theorem of \citet{GermainLLM09,GermainLLMR15}; it takes the form of an upper bound on the ``deviation'' between the true risk $ R_{\D}(G_Q)$  and empirical risk $ R_{S}(G_Q) $ of the Gibbs classifier, according to a convex function $D {:} [0 , 1]{\times} [0, 1]{\to}\mathbb{R}$. 
While most of the PAC-Bayesian bounds are probabilistic bounds, we state here an \emph{expected risk bound}.
More specifically, Theorem~\ref{theo:PB} below is a tool to upper-bound $\mathbb E_{{S\sim\D^m}} R_{\D}(G_\postS)$---where $\postS$ is the posterior distribution outputted by a given learning algorithm after observing the learning sample $S$---while PAC-Bayes usually bounds $R_{\D}(G_Q)$ uniformly for all distribution $\posterior$, but with high probability over the draw of $S \sim \D^m$. 
Since by definition posterior distributions are data dependent, this different point of view on PAC-Bayesian analysis has the advantage to involve an expectation over all the possible learning samples (of a given size) in bounds itself.
\begin{theorem}
	\label{theo:PB}
	For any distribution $\mathcal{D}$ on $\Xcal \times \Ycal$, for any set of voters $\Hcal$, for any prior distribution $\prior$ on $\mathcal{H}$, for any convex function $D : [0 , 1]  \times  [0, 1] \to \mathbb{R}$, we have
	\begin{align*}
	 D \left(\ \E{{S\sim\D^m}} R_{S}(G_\postS)  , \E{S\sim\D^m}R_{\D}(G_\postS)   \right) 
	\leq&\    \frac{1}{m}   \left[ \ \ \ \E{\mathclap{S\sim\D^m}}\ \KL(\postS \| \prior)   +  \ln \bigg( \E{S \sim \D^m}\, \E{h \sim \prior}  e^{m\,D\left(R_S(h), R_\D(h) \right)}   \bigg)   \right] ,   
	\end{align*}
	where $R_\D(h)$ and $R_S(h)$ are respectively the true and the empirical risks of individual voters. 
\end{theorem}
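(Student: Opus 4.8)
The plan is to establish the inequality \emph{pointwise} first—for an arbitrary fixed sample $S$ together with its associated posterior $\postS$—and only afterwards take the expectation over $S\sim\D^m$. This follows the classical template of \citet{GermainLLM09,GermainLLMR15}, with one crucial modification: the step that in the probabilistic version invokes Markov's inequality to trade the expectation over samples for a high-probability statement is here replaced by Jensen's inequality applied to the concave logarithm, which is precisely what turns the result into an \emph{expected} risk bound.

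First I would use the fact that the Gibbs risk is linear in the drawn voter, namely $R_S(G_{\postS})=\E{h\sim\postS}R_S(h)$ and $R_{\D}(G_{\postS})=\E{h\sim\postS}R_{\D}(h)$. Since $D$ is convex, Jensen's inequality yields
\begin{align*}
D\big(R_S(G_{\postS}),R_{\D}(G_{\postS})\big) \;=\; D\Big(\E{h\sim\postS}R_S(h),\,\E{h\sim\postS}R_{\D}(h)\Big) \;\leq\; \E{h\sim\postS}D\big(R_S(h),R_{\D}(h)\big).
\end{align*}
Next I would apply the change-of-measure inequality (the Donsker–Varadhan variational form of the KL divergence): for any $\phi:\Hcal\to\mathbb{R}$ one has $\E{h\sim\postS}\phi(h)\leq \KL(\postS\|\prior)+\ln\E{h\sim\prior}e^{\phi(h)}$. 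Choosing $\phi(h)=m\,D(R_S(h),R_{\D}(h))$ and combining with the previous display gives the pointwise bound
\begin{align*}
m\,D\big(R_S(G_{\postS}),R_{\D}(G_{\postS})\big) \;\leq\; \KL(\postS\|\prior)+\ln\E{h\sim\prior}e^{m\,D(R_S(h),R_{\D}(h))},
\end{align*}
which holds for every $S$.

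To conclude, I would take $\E{S\sim\D^m}$ of both sides of this pointwise bound. The first right-hand term becomes $\E{S\sim\D^m}\KL(\postS\|\prior)$ directly; for the second, concavity of the logarithm together with Jensen's inequality lets me pull the sample expectation inside the log, giving $\E{S\sim\D^m}\ln(\cdot)\leq\ln\E{S\sim\D^m}(\cdot)$ and producing exactly the term $\ln\big(\E{S\sim\D^m}\E{h\sim\prior}e^{m\,D(R_S(h),R_{\D}(h))}\big)$ of the statement. On the left-hand side I would invoke convexity of $D$ a second time, now with respect to the outer sample expectation, to get $D\big(\E{S}R_S(G_{\postS}),\E{S}R_{\D}(G_{\postS})\big)\leq\E{S}D\big(R_S(G_{\postS}),R_{\D}(G_{\postS})\big)$. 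Chaining these three estimates and dividing by $m$ delivers the claim.

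The argument is a chain of three Jensen applications wrapped around a single change-of-measure step, so there is no heavy computation; the real care lies in the \emph{directions}. Convexity of $D$ is used twice in the standard Jensen sense ($D$ of a mean $\le$ mean of $D$)—once inside the Gibbs expectation and once across samples—so that the target quantity always sits on the smaller side, while concavity of $\ln$ is used in the opposite sense to move $\E{S}$ through the logarithm on the larger side; reversing any one of these would break the chain. I would also check the mild finiteness conditions guaranteeing that $\E{S}\KL(\postS\|\prior)$ and the exponential moment are well defined, which is immediate since the risks lie in $[0,1]$ and $D$ is continuous, hence bounded, on the compact square $[0,1]\times[0,1]$.
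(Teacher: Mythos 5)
Your proof is correct and follows essentially the same route as the paper's: Jensen's inequality on the convex function $D$, the Donsker--Varadhan change of measure with $\phi(h)=m\,D(R_S(h),R_\D(h))$, and Jensen's inequality on the concave $\ln$ to pull $\E{S\sim\D^m}$ inside the logarithm. The only cosmetic difference is that you establish the bound pointwise in $S$ and apply Jensen on $D$ twice (once over $h\sim\postS$, once over $S$), whereas the paper applies it once to the joint expectation over $(S,h)$ --- these are equivalent, and all your inequality directions are right.
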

Similarly to \citet{GermainLLM09,GermainLLMR15}, by selecting a well-suited deviation function~$D$  and by upper-bounding 
$\mathbb E_{S}\, \mathbb E_{h}  e^{m\,D(R_S(h),R_\D(h))}$,
we can prove the \emph{expected bound} counterparts of the classical PAC-Bayesian theorems of~\citet{McAllester99,Seeger02,catoni2007pac}. 
The proof presented below borrows the straightforward proof technique of~\citet{begin-16}.
Interestingly, this approach highlights that the expectation bounds are obtained simply by replacing the \emph{Markov inequality}  by the \emph{Jensen inequality} (respectively Theorems~\ref{theo:markov} and~\ref{theo:jensen}, in Appendix).
\begin{proof}[Proof of Theorem~\ref{theo:PB}] The last three inequalities below are  obtained by applying Jensen's inequality on the convex function $D$, the change of measure inequality \citep[as stated by][Lemma~3]{begin-16}, and Jensen's inequality on the concave function $\ln$.
\allowdisplaybreaks[1]
\begin{align*}
	 m D \left( \E{{S\sim\D^m}} R_{S}(G_\postS)  , \E{S\sim\D^m}R_{\D}(G_\postS)   \right) \
	 & =\ m D \left( \E{{S\sim\D^m}} \E{h\sim\postS} R_{S}(h)  , \E{S\sim\D^m} \E{h\sim\postS}R_{\D}(h)   \right) \\
	 & \leq\   \E{{S\sim\D^m}} \E{h\sim\postS}m D \left( R_{S}(h)  , R_{\D}(h)   \right) \\
         &\leq\ \E{{S\sim\D^m}}\left[ \KL(\postS \| \prior)   +  \ln \bigg(\E{h \sim \prior}  e^{m\,D\left(R_S(h), R_\D(h) \right)}   \bigg) \right] \\
	 &\leq\  \E{{S\sim\D^m}}   \KL(\postS \| \prior)    +  \ln \bigg(\E{{S\sim\D^m}} \E{h \sim \prior}  e^{m\,D\left(R_S(h), R_\D(h) \right)}   \bigg)\,.
\end{align*}
\end{proof}
Since the C-bound of Equation~\eqref{eq:cbound} involves the expected disagreement $\dq$, we also derive below the expected bound that upper-bounds the deviation between $\Esp_{{S\sim\D^m}} d_S(Q_S)$  and $\Esp_{S\sim\D^m}d_\D(Q_S)$ under a convex function $D$.
Theorem~\ref{theo:PBd} can be seen as the \emph{expectation} version of probabilistic bounds over $d_S(Q_S)$ proposed by \citet{Lacasse06,GermainLLMR15}.
\begin{theorem}
	\label{theo:PBd}
	For any distribution $\mathcal{D}$ on $\Xcal \times \Ycal$, for any set of voters $\mathcal{H}$, for any prior distribution $\prior$ on $\mathcal{H}$, for any convex function $D : [0 , 1]  \times  [0, 1] \to \mathbb{R}$, we have
	\begin{align*}
	& D \left( \E{{S\sim\D^m}} d_S(Q_S)  , \E{S\sim\D^m}d_\D(Q_S)   \right) \ \leq\    \frac{2}{m}  \bigg[ \ \ \ \Eclap{S\sim\D^m}\KL(\postS \| \prior)   +  \ln \sqrt{   \E{S \sim \D^m}\E{(h,h') \sim \prior^2} \hspace{-5mm} e^{mD\left(d_S(h,h'), d_\D(h,h') \right)}  } \bigg] ,   
	\end{align*}
	where  $d_\D(h,h') = \Esp_{x \sim \D_\Xcal}\, \Ind{[h(x) {\neq} h'(x)]}$
	is the disagreement of voters $h$ and $h'$ on the distribution $\D$, and $d_S(h,h')$ is its empirical  counterpart.
\end{theorem}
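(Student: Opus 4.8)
The plan is to replay, essentially verbatim, the four-step argument used to prove Theorem~\ref{theo:PB}, but carried out over \emph{pairs} of voters instead of single voters. The crucial observation is that the two disagreement quantities appearing in the statement decompose as expectations with respect to the product distribution $\postStwo=\postS\otimes\postS$ on $\Hcal\times\Hcal$: indeed $d_S(Q_S)=\E{(h,h')\sim\postStwo}d_S(h,h')$ and, likewise, $d_\D(Q_S)=\E{(h,h')\sim\postStwo}d_\D(h,h')$. This is exactly the pair analogue of the identity $R_S(G_{\postS})=\E{h\sim\postS}R_S(h)$ that drives the proof of Theorem~\ref{theo:PB}, with the single-voter posterior $\postS$ replaced by the pair posterior $\postStwo$, the single-voter prior $\prior$ replaced by the product prior $\prior^2=\prior\otimes\prior$, and the per-voter risk $R_S(h)$ replaced by the per-pair disagreement $d_S(h,h')$.

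Starting from $m\,D\!\left(\E{S\sim\D^m}d_S(Q_S),\E{S\sim\D^m}d_\D(Q_S)\right)$ and inserting these pair decompositions, I would apply the same three inequalities as before, in the same order: Jensen's inequality on the convex function $D$ to pull it inside the expectations over $S$ and over $(h,h')\sim\postStwo$; the change-of-measure inequality of \citet[Lemma~3]{begin-16} to exchange the pair posterior $\postStwo$ for the pair prior $\prior^2$ at the cost of $\KL(\postStwo\|\prior^2)$; and finally Jensen's inequality on the concave function $\ln$ to move $\E{S\sim\D^m}$ outside the logarithm. This produces the intermediate bound
\[
m\,D\!\left(\E{S\sim\D^m}d_S(Q_S),\,\E{S\sim\D^m}d_\D(Q_S)\right)\ \leq\ \E{S\sim\D^m}\KL(\postStwo\|\prior^2)\,+\,\ln\E{S\sim\D^m}\E{(h,h')\sim\prior^2}e^{mD(d_S(h,h'),d_\D(h,h'))}.
\]

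The single new ingredient---and the step that accounts for both the leading factor $2$ and the square root in the statement---is the tensorisation of the Kullback--Leibler divergence for product measures, namely $\KL(\postStwo\|\prior^2)=2\,\KL(\postS\|\prior)$. Substituting this identity, dividing through by $m$, and using $\tfrac1m\ln(\cdot)=\tfrac2m\ln\sqrt{\,(\cdot)\,}$ to absorb the factor $2$ into a square root inside the logarithm yields precisely the claimed inequality. I do not expect any genuine obstacle here: every analytic estimate is inherited unchanged from Theorem~\ref{theo:PB}, and the only points requiring care are verifying the product-measure KL identity and tracking the factor $2$ correctly as it passes through the division by $m$ and into the square root.
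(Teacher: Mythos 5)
Your proposal is correct and follows essentially the same route as the paper's own proof: the paper likewise decomposes $d_S(Q_S)$ and $d_\D(Q_S)$ over pairs $(h,h')\sim\postStwo$, replays the three inequalities of Theorem~\ref{theo:PB} with the product prior $\prior^2$, and then invokes the identity $\KL(\postStwo \| \prior^2) = 2\KL(\postS \| \prior)$ (citing Theorem~25 of \citet{GermainLLMR15}) to produce the leading factor $2$ and the square root. Your write-up merely spells out the intermediate steps that the paper abbreviates.
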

\begin{proof}
First, we apply the exact same steps as in the proof of Theorem~\ref{theo:PB}:
\begin{align*}
& m D \left( \E{{S\sim\D^m}} d_S(Q_S)  , \E{S\sim\D^m}d_\D(Q_S)   \right)  \ =\ m D \left( \E{{S\sim\D^m}}\ \E{(h,h')\sim\postStwo}\ d_S(h,h')  , \E{S\sim\D^m}\ \E{(h,h')\sim\postStwo}\ d_\D(h,h')   \right) 
\\
\vdots \ & \\
\leq& \ \ \E{S\sim\D^m}  \KL(\postStwo \| \prior^2)   +  \ln     \E{S \sim \D^m}\E{(h,h') \sim \prior^2} \hspace{-5mm} e^{mD\left(d_S(h,h'), d_\D(h,h') \right)} .
\end{align*}
Then, we use the fact that  
$\KL(\postStwo \| \prior^2)  = 2\KL(\postS \| \prior) $ \citep[see][Theorem~25]{GermainLLMR15}.
\end{proof}

In the following we  provide an extension of this PAC-Bayesian framework to multiview learning with more than two views.

	\section{Multiview PAC-Bayesian Approach} 
\label{sec:MVPB-Bound}
\subsection{Notations and Setting}
We consider binary classification problems where the multiview observations $\xbf  =  (x^1,\ldots,x^V)$ belong to a multiview input set
\mbox{${\cal X}  =  \mathcal{X}_1\times  \ldots \times\mathcal{X}_V$}, where $V\geq 2$ is the number of views of not-necessarily the same dimension.
We denote $\Vcal$ the set of the $V$ views.
In binary classification, we assume that examples are pairs $(\xbf, y)$, with $y\in{\cal Y} = \{-1,+1\}$, drawn according to an unknown distribution $\D$ over $\Xcal  \times  \mathcal{Y}$.
To model the two-level multiview approach, we follow the next setting.
For each view $v \in \Vcal$, we consider a view-specific set $\Hcal_v$ of voters $h  :  \mathcal{X}_v \to {\cal Y}$, and a  prior distribution $\prior_v$ on $\Hcal_v$. Given a \emph{hyper-prior} distribution $\hyperprior$ over the views $\Vcal$, and a multiview learning sample $S = \{(\xbf_i,y_i)\}_{i=1}^m \sim (\D)^m$, our PAC-Bayesian learner \mbox{objective} is twofold: {\it (i)} finding a posterior distribution $\posterior_v$ over $\mathcal{H}_v$  for all views $v \in \Vcal$; {\it (ii)} finding a \emph{hyper-posterior} distribution $\hyperposterior$ on the set of views $\Vcal$. 
This hierarchy of distributions is illustrated by Figure~\ref{fig:MultiviewHierarchy}. 
The learned distributions express a multiview weighted majority vote $B_{\hyperposterior}^{\mv}$ defined as 
$$B_{\hyperposterior}^{\mv}(\mathbf{x}) = \sign \left[\E{ v \sim \hyperposterior}\  \E{ h \sim \posterior_v} h(x^v) \right].$$
Thus, the learner aims at constructing the posterior and hyper-posterior distributions that minimize the true risk $R_{\mathcal{D}}(B_{\hyperposterior}^{\mv})$ of the multiview weighted majority vote:
$$ R_{\mathcal{D}}(B_{\hyperposterior}^{\mv}) = \E{(\mathbf{x},y) \sim \mathcal{D}} \Ind{[B_{\hyperposterior}^{\mv}(\mathbf{x}) \neq y]}.
$$
As pointed out in Section~\ref{sec:PAC-Bayes}, the PAC-Bayesian approach deals with the risk of the stochastic Gibbs classifier $G_{\hyperposterior}^{\mv}$ defined as follows in our multiview setting, and that can be rewritten in terms of \textit{expected disagreement} $\dmv$ and \textit{expected joint error} $\emv$:
\begin{align}
\nonumber 
 R_{\mathcal{D}}(G_{\hyperposterior}^{\mv})  \ &=\  \E{(\xbf,y) \sim \D} \ \E{ v \sim \hyperposterior} \ \E{h \sim \posterior_v} \Ind{[h(x^v) \neq y]}\\
\label{eq:SplitGibbsRisk}
 &= \ \tfrac{1}{2}\, \dmv + \emv\,,\\
\nonumber\text{where}\quad 
\dmv \  &= \  \Esp_{\xbf \sim \mathcal{D}_{\mathcal{X}}} \Esp_ {v \sim \hyperposterior} \Esp_{v' \sim \hyperposterior}   \Esp_{h \sim \posterior_v}  \Esp_{h' \sim \posterior_{v'}}   \Ind{[ h(x^v) {\ne} h'(x^{v'})]}, \\
\nonumber\text{and}\quad  \emv   \  &= \      \E{(\xbf,y) \sim \mathcal{D}} \Esp_ {v \sim \hyperposterior} \Esp_{v' \sim \hyperposterior}   \Esp_{h \sim \posterior_v}   \Esp_{h' \sim \posterior_{v'}}   \Ind{[  h(x^v) {\ne} y ]}  \Ind{[ h'(x^{v'}) {\ne} y ]}.
\end{align}
Obviously, the empirical counterpart of the Gibbs classifier's risk  $R_{\mathcal{D}}(G_{\hyperposterior}^{\mv})$ is 
\begin{align*}
R_{S}(G_{\hyperposterior}^{\mv}) &= \frac{1}{m} \sum_{i=1}^m \E{ v \sim \hyperposterior} \  \E{ h \sim \posterior_v} \Ind{[h(x_i^v) \neq y_i]}\\
&= \frac{1}{2}\dmvs + \emvs\,,
\end{align*}
where $\dmvs$ and $\emvs$  are respectively the empirical estimations of $\dmv$ and $\emv$ on the learning sample $S$.
As in the single-view PAC-Bayesian setting, the multiview weighted majority vote $B_{\hyperposterior}^{\mv}$ is closely related to the stochastic multiview Gibbs classifier $G_{\hyperposterior}^{\mv}$, and a generalization bound for $G_{\hyperposterior}^{\mv}$ gives rise to a generalization bound for $B_{\hyperposterior}^{\mv}$. Indeed, it is easy to show that $R_{\mathcal{D}}(B_\hyperposterior^{\mv})\leq 2R_{\mathcal{D}}(G_{\hyperposterior}^{\mv})$, meaning that an upper bound over $R_{\mathcal{D}}(G_{\hyperposterior}^{\mv})$ gives an upper bound for the majority vote.
Moreover the C-Bound of Equation~\eqref{eq:cbound} can be extended to our multiview setting by Lemma~\ref{lem:mv-cbound} below.
Equation~\eqref{eq:cbound_multiview} is a straightforward generalization of the single-view C-bound of Equation~\eqref{eq:cbound}. 
Afterward, Equation~\eqref{eq:cbound_multiview_2} is obtained by rewriting $R_{\mathcal{D}}(G_{\hyperposterior}^{\mv})$ as the $\hyperposterior$-average of the risk associated to each view, and lower-bounding $\dmv$  by the $\hyperposterior$-average of the disagreement associated to each view.
\begin{lemma} \label{lem:mv-cbound}
	Let $V \geq 2$ be the number of views. 
	For all posterior $\{\posterior_v\}_{v=1}^V$ and hyper-posterior $\hyperposterior$ distribution, if $R_{\Dcal}(G_{\hyperposterior}^{\textsc{\mv}}) <\frac12$, then we have
\begin{eqnarray}
\label{eq:cbound_multiview}
R_{\mathcal{D}}(B_\hyperposterior^{\mv})
&\leq &
1- \frac{\displaystyle  \big(1-2R_{\Dcal}(G_{\hyperposterior}^{\textsc{\mv}})\big)^2}{\displaystyle 1-2 \dmv} \label{eq:cbound_multiview_4}\\
&\ \leq\  & 
1- \frac{  \Big(1-2  \Esp_ {v \sim \hyperposterior} R_{\Dcal}(G_{Q_v})\Big)^2}{ 1-2 \Esp_ {v \sim \hyperposterior}  d_{\Dcal}(Q_v) } \label{eq:cbound_multiview_2}\,.
\end{eqnarray}
\end{lemma}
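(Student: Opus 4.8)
The argument has two parts, one for each inequality. For Equation~\eqref{eq:cbound_multiview} the plan is to recognise the two-level construction as an ordinary single-view majority vote over the enlarged voter set whose elements are pairs $(v,h)$ drawn by first sampling $v\sim\hyperposterior$ and then $h\sim\posterior_v$. Under this identification $\gqmv$ is exactly a single-view Gibbs classifier, $\bqmv$ the associated majority vote, and $\dmv$ the associated expected disagreement, so Equation~\eqref{eq:cbound_multiview} is just the single-view C-bound~\eqref{eq:cbound} instantiated for this voter set; the hypothesis $R_{\Dcal}(\gqmv)<\tfrac12$ is precisely the condition under which~\eqref{eq:cbound} is valid. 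The only point I would pause on is positivity of the denominator $1-2\dmv$: writing $\mu_1=1-2R_{\Dcal}(\gqmv)$ and $\mu_2=1-2\dmv$ as the first and second moments of the vote margin, one always has $\mu_2\geq\mu_1^2$, so $\mu_1>0$ forces $\mu_2>0$, that is $\dmv<\tfrac12$.

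For Equation~\eqref{eq:cbound_multiview_2} the plan is to keep the numerator exact and to relax only the disagreement term in the denominator. Swapping the order of expectation (Fubini) gives the identity $R_{\Dcal}(\gqmv)=\Esp_{v\sim\hyperposterior}R_{\Dcal}(G_{Q_v})$, so the two numerators coincide. It then suffices to establish the single genuinely new ingredient, namely $\dmv\geq\Esp_{v\sim\hyperposterior}d_{\Dcal}(Q_v)$.

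To prove this I would introduce, for each view $v$, the expected vote $a_v(\xbf)=\Esp_{h\sim\posterior_v}h(x^v)\in[-1,1]$. Because $h\sim\posterior_v$ and $h'\sim\posterior_{v'}$ are drawn independently and take values in $\{-1,+1\}$, the disagreement indicator has conditional mean $\tfrac12\big(1-a_v(\xbf)a_{v'}(\xbf)\big)$; averaging over independent $v,v'\sim\hyperposterior$ and over $\xbf$ yields $\dmv=\Esp_{\xbf\sim\Dcal_\Xcal}\tfrac12\big(1-(\Esp_{v\sim\hyperposterior}a_v(\xbf))^2\big)$, while the per-view quantity is $\Esp_{v\sim\hyperposterior}d_{\Dcal}(Q_v)=\Esp_{\xbf\sim\Dcal_\Xcal}\tfrac12\big(1-\Esp_{v\sim\hyperposterior}a_v(\xbf)^2\big)$. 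The desired inequality is thus equivalent to the pointwise bound $\big(\Esp_{v\sim\hyperposterior}a_v(\xbf)\big)^2\leq\Esp_{v\sim\hyperposterior}a_v(\xbf)^2$, which is exactly Jensen's inequality for the convex map $t\mapsto t^2$. Recognising that the gap between $\dmv$ and its per-view average is precisely this Jensen gap is, to my mind, the heart of the lemma and the step where the real content lives.

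Finally I would close by monotonicity of $x\mapsto 1-(1-2R_{\Dcal}(\gqmv))^2/x$ on the positive axis. Since $\Esp_{v\sim\hyperposterior}d_{\Dcal}(Q_v)\leq\dmv<\tfrac12$, the new denominator satisfies $1-2\Esp_{v\sim\hyperposterior}d_{\Dcal}(Q_v)\geq 1-2\dmv>0$; enlarging a positive denominator under a nonnegative numerator can only shrink the subtracted fraction, so the right-hand side of~\eqref{eq:cbound_multiview_2} dominates that of~\eqref{eq:cbound_multiview}, chaining the two displays and finishing the proof.
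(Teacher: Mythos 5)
Your proof is correct and follows essentially the same route as the paper: the same identity $R_{\Dcal}(G_{\hyperposterior}^{\mv})=\Esp_{v\sim\hyperposterior}R_{\Dcal}(G_{Q_v})$, the same key step of writing $\dmv$ via the squared $\hyperposterior$-averaged vote and applying Jensen's inequality to get $\dmv\geq\Esp_{v\sim\hyperposterior}d_{\Dcal}(Q_v)$, and the same monotonicity argument to chain the two bounds. The only cosmetic difference is that for the first inequality you reduce to the single-view C-bound over the joint voter set of pairs $(v,h)$, whereas the paper invokes the Cantelli--Chebyshev inequality directly---these are equivalent, since that is exactly how the single-view C-bound is proved; your explicit check that $1-2\dmv>0$ is a welcome detail the paper leaves implicit.
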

\begin{proof} Equation~\eqref{eq:cbound_multiview_4} follows from the Cantelli-Chebyshev's inequality (Theorem~\ref{theo:chebyshev}, in Appendix). 
To prove Equation~\eqref{eq:cbound_multiview_2}, we first notice that in the binary setting where $y\in\{-1,1\}$ and $h:\Xcal\to\{-1,1\}$, we have 
$\Ind{[h(x^v) \neq y]}  =  \frac12 (1-y\,h(x^v))$, and
\begin{align}
R_{\mathcal{D}}(G_{\rho}^\mv) 
  \nonumber 
\ &=\ \E{(\mathbf{x},y) \sim \mathcal{D}} \ \E{ v \sim \hyperposterior} \ \E{h \sim \posterior_v} \Ind{[h(x^v) \neq y]} \nonumber  \\
& 
 =\ \frac{1}{2}\bigg(1- \E{(\xbf,y) \sim \mathcal{D}} \  \E{v \sim \hyperposterior} \ \E{h \sim \posterior_v} y\,h(x^v) \bigg) \nonumber  \\
  & 
  =\  
 \E{v \sim \hyperposterior} R_{\mathcal{D}}(G_{Q^v})\,.\nonumber 
\end{align}
Moreover, we have
\begin{align}
\dmv 
 \nonumber
\ &=\  \Esp_{\xbf \sim \mathcal{D}_{\mathcal{X}}} \Esp_ {v \sim \hyperposterior} \Esp_{v' \sim \hyperposterior}   \Esp_{h \sim \posterior_v}  \Esp_{h' \sim \posterior_{v'}} \Ind{[ h(x^v) {\ne} h'(x^{v'})]} \\
&=\ \frac{1}{2} \bigg( 1- \Esp_{\xbf \sim \mathcal{D}_{\mathcal{X}}} \Esp_ {v \sim \hyperposterior} \Esp_{v' \sim \hyperposterior}   \Esp_{h \sim \posterior_v}  \Esp_{h \sim \posterior_{v'}}  h(x^v) \times h'(x^{v'})  \bigg) \nonumber \\
\nonumber&=\ \frac{1}{2} \bigg( 1- \E{\xbf \sim \mathcal{D}_{\mathcal{X}}}\bigg[  \E{v \sim \hyperposterior} \, \E{h \sim \posterior_v} h(x^v) \bigg]^2  \bigg)  \,.
\end{align}
 From Jensen's inequality (Theorem~\ref{theo:jensen}, in Appendix) it comes
\begin{align*}
\dmv\ 
&\geq\ \frac{1}{2} \bigg( 1- \E{\xbf \sim \mathcal{D}_{\mathcal{X}}} \,  \E{v \sim \hyperposterior}\bigg[  \E{h \sim \posterior_v} h(x^v) \bigg]^2  \bigg) \\
& = \  \E{v \sim \hyperposterior} \Bigg[\frac{1}{2} \bigg( 1- \E{\xbf \sim \mathcal{D}_{\mathcal{X}}}\bigg[ \E{h \sim \posterior_v} h(x^v) \bigg]^2  \bigg) \Bigg] 
\\
&
 =\ \E{v \sim \hyperposterior}  d_{\Dcal}(Q_v)  \,.
\end{align*}
By replacing $R_{\mathcal{D}}(G_{\rho}^\mv)$ and $\dmv$ in  Equation~\eqref{eq:cbound_multiview_4}, we obtain
\begin{align*}
1 -  \frac{\displaystyle  \big(1-2R_{\Dcal}(G_{\hyperposterior}^{\textsc{\mv}})\big)^2}{\displaystyle 1-2 \dmv}
\ \leq \
1 -  \frac{  \Big(1-2  \Esp_ {v \sim \hyperposterior} R_{\Dcal}(G_{Q^v})\Big)^2}{ 1-2 \Esp_ {v \sim \hyperposterior}  d_{\Dcal}(Q_v) } \,.
\end{align*}
\vspace{-10mm}

\end{proof}
Similarly than for the mono-view setting, Equations~\eqref{eq:SplitGibbsRisk} and~\eqref{eq:cbound_multiview_4} suggest that a good trade-off between the risk of the Gibbs classifier $G_{\hyperposterior}^{\textsc{\mv}}$ and the disagreement $\dmv$ between pairs of voters will lead to a well-performing majority vote.
Equation~\eqref{eq:cbound_multiview_2}  exhibits the role of diversity among the views thanks to the disagreement's expectation over the views $\Esp_ {v \sim \hyperposterior}  d_{\Dcal}(Q_v)$.

\subsection{General Multiview PAC-Bayesian Theorems}
\label{sec:theorems}
Now we state our general PAC-Bayesian theorem suitable for the above multiview learning setting with a two-level hierarchy of distributions over views (or voters).
A key step in PAC-Bayesian proofs is the use of a \textit{change of measure inequality}~\citep{McAllester03}, based on the Donsker-Varadhan inequality~\citep{donsker1975}.
Lemma~\ref{lem:change} below extends this tool to our multiview setting.
\begin{lemma} 
	\label{lem:change}
	For any set of priors $\{\prior_v\}_{v=1}^V$ and any set of posteriors $\{\posterior_v\}_{v=1}^V$, for any hyper-prior distribution $\hyperprior$ on views $\Vcal$ and hyper-posterior distribution $\hyperposterior$ on $\Vcal$, 
	and for any measurable function $\phi  :  \mathcal{H}_v  \to  \mathbb{R}$, we have
	\begin{align*}
	& \E{ v \sim \hyperposterior}\  \E{ h \sim \posterior_v} \phi (h)  
	 \leq\  \E{ v \sim \hyperposterior}   \KL(\posterior_v \| \prior_v)  +  \KL(\hyperposterior \| \hyperprior)  +    \ln  \left(  \E{ v \sim \hyperprior}\   \E{ h \sim \prior_v}   e^{\phi (h)}  \right). 
	\end{align*}
\end{lemma}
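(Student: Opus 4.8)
The plan is to reduce the two-level statement to two successive applications of the ordinary single-view change of measure (Donsker--Varadhan) inequality---the same tool invoked in the proof of Theorem~\ref{theo:PB}---applied once at the level of the voters inside each view, and once at the level of the views themselves.

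First I would fix an arbitrary view $v\in\Vcal$ and apply the single-view change of measure inequality to the voter-level distributions $\posterior_v$ and $\prior_v$ on $\Hcal_v$, with the measurable function $\phi$. This gives, for every $v$,
\[
\E{h \sim \posterior_v}\phi(h) \ \leq\ \KL(\posterior_v \| \prior_v) + \ln\left(\E{h \sim \prior_v} e^{\phi(h)}\right).
\]
Since this holds pointwise in $v$, I can take the expectation over $v \sim \hyperposterior$ on both sides without reversing the inequality, obtaining
\[
\E{v \sim \hyperposterior}\ \E{h \sim \posterior_v}\phi(h) \ \leq\ \E{v \sim \hyperposterior}\KL(\posterior_v \| \prior_v) + \E{v \sim \hyperposterior}\, \psi(v),
\]
where I introduce the shorthand $\psi(v) = \ln \big(\E{h \sim \prior_v} e^{\phi(h)}\big)$, a function depending only on the view $v$.

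The second step is to control the residual term $\E{v \sim \hyperposterior}\psi(v)$ by a second application of the change of measure inequality, this time at the outer level, with $\hyperposterior$ and $\hyperprior$ playing the roles of posterior and prior and $\psi$ playing the role of the test function:
\[
\E{v \sim \hyperposterior}\psi(v) \ \leq\ \KL(\hyperposterior \| \hyperprior) + \ln\left(\E{v \sim \hyperprior} e^{\psi(v)}\right).
\]
The key simplification that makes everything collapse to the claimed form is that $e^{\psi(v)} = \E{h \sim \prior_v} e^{\phi(h)}$ by definition of $\psi$, so the final logarithm becomes exactly $\ln\big(\E{v \sim \hyperprior}\E{h \sim \prior_v} e^{\phi(h)}\big)$. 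Substituting this bound back into the displayed inequality from the first step yields the statement of the lemma.

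I do not expect a genuine obstacle here, as the argument is a clean nesting. The only points requiring a modicum of care are \emph{(i)} ensuring that $\psi$ is measurable so that the outer inequality applies, which follows from the measurability of $\phi$ and Fubini-type regularity of the inner integral, and \emph{(ii)} checking that the inner bound, which holds for each individual $v$, is preserved under the outer expectation $\E{v \sim \hyperposterior}$, which is immediate since taking expectations is monotone. The conceptual heart of the proof is recognizing that the correct test function to feed into the outer Donsker--Varadhan step is precisely the log-partition function $\psi(v)$ produced by the inner step, so that its exponential reconstitutes the inner prior expectation and the two logarithms merge into the single term appearing on the right-hand side.
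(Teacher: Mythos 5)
Your proof is correct and takes essentially the same route as the paper: the paper's proof is precisely these two nested change-of-measure steps, just written out inline (inserting the density ratios $\posterior_v/\prior_v$ and then $\hyperposterior/\hyperprior$ and applying Jensen's inequality to $\ln$ each time) rather than invoking the Donsker--Varadhan inequality as a black box. In particular, the paper's second Jensen step is exactly your outer application with the log-partition function $\psi(v)=\ln\left(\E{h\sim\prior_v}e^{\phi(h)}\right)$ as the test function.
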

\begin{proof} We have
\begin{align*}
\nonumber  \E{ v \sim \hyperposterior} \,  \E{ h \sim \posterior_v} \phi (h)\ &=\ \E{ v \sim \hyperposterior} \  \E{ h \sim \posterior_v} \ln e^{\phi (h)} \\
\nonumber   &=\ \E{ v \sim \hyperposterior} \  \E{ h \sim \posterior_v} \ln \bigg( \frac{\posterior_v(h)}{\prior_v(h)} \frac{\prior_v(h)}{\posterior_v(h)} e^{\phi (h)} \bigg)  \\
\nonumber &=\ \E{ v \sim \hyperposterior}\  \bigg[ \E{ h \sim \posterior_v} \ln \bigg( \frac{\posterior_v(h)}{\prior_v(h)} \bigg) +  \E{ h \sim \posterior_v} \ln \bigg( \frac{\prior_v(h)}{\posterior_v(h)} e^{\phi (h)} \bigg) \bigg].
\end{align*}
According to the Kullback-Leibler definition, we have
\begin{align*}
\nonumber \E{ v \sim \hyperposterior} \,  \E{ h \sim \posterior_v} \phi (h) \ =\   \E{ v \sim \hyperposterior} \bigg[  \KL(\posterior_v \| \prior_v) +     \E{ h \sim \posterior_v}   \ln  \bigg( 
\frac{\prior_v(h)}{\posterior_v(h)} e^{\phi (h)}  \bigg)  
\bigg].
\end{align*}
By applying Jensen's inequality (Theorem~\ref{theo:jensen}, in Appendix) on the concave function $\ln$, we have
\begin{align*}
\nonumber  \E{ v \sim \hyperposterior} \,  \E{ h \sim \posterior_v} \phi (h)  \ &\le \ \E{ v \sim \hyperposterior} \  \bigg[ \KL(\posterior_v \| \prior_v) +   \ln \bigg( \E{ h \sim \prior_v} e^{\phi (h)} \bigg) \bigg] \\
\nonumber  &= \ \E{ v \sim \hyperposterior} \KL(\posterior_v \| \prior_v) + \E{ v \sim \hyperposterior} \ln \bigg( \frac{\hyperposterior(v)}{\hyperprior(v)} \frac{\hyperprior(v)}{\hyperposterior(v)} \E{ h \sim \prior_v} e^{\phi (h)} \bigg) \\
\nonumber  &= \ \E{ v \sim \hyperposterior}  \KL(\posterior_v \| \prior_v) + \KL(\hyperposterior \| \hyperprior) 
+    \E{ v \sim \hyperposterior}    \ln  \bigg( \frac{\hyperprior(v)}{\hyperposterior(v)}  \E{ h \sim \prior_v} e^{\phi (h)}  \bigg).
\end{align*}
Finally, we apply again the Jensen inequality (Theorem~\ref{theo:jensen}) on $\ln$ to obtain the lemma.
\end{proof}

Based on Lemma~\ref{lem:change}, the following theorem can be seen as a generalization of Theorem~\ref{theo:PB} to multiview.
Note that we still rely on a general convex function $D : [0,1]  \times  [0,1] \to \mathbb{R}$, that measures the ``deviation'' between the empirical disagreement/joint error and the true risk of the Gibbs classifier.
\begin{theorem}
	\label{theo:MVPB1}
	Let $V \geq 2$ be the number of views. 
	For any distribution $\D$ on $\Xcal \times \Ycal$, for any set of prior distributions $\{\prior_v\}_{v=1}^V$, for any hyper-prior distribution $\hyperprior$ over $\Vcal$, for any convex function $D : [0,1]  \times  [0,1]  \to \mathbb{R}$,
	we have
	\begin{align*}
	 D\Big( \tfrac12   \E{ S \sim \D^m}    \dmvsS  +      \E{ S \sim \D^m}  \emvsS, \E{ S \sim \D^m}    R_{\mathcal{D}}(G_{\hyperpostS}^{\mv})\Big)  
  \leq   \frac{1}{m} \bigg[  \E{ S \sim \D^m}\ \E{ v \sim \hyperpostS}     \KL(\postvS\| \prior_v)&\\
	+   \ \E{S \sim \D^m} \  \KL(\hyperpostS \| \hyperprior)
	 + \ln \left( \E{ S \sim \D^m} \,  \E{ v \sim \hyperprior} \,  \E{ h \sim \prior_v}  e^{ m D\left(R_S(h) , R_\D(h)\right)}  \right)   \bigg].&
	\end{align*}
\end{theorem}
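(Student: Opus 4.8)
The plan is to transcribe the proof of Theorem~\ref{theo:PB} line by line, with the single-view change-of-measure inequality replaced by its hierarchical analogue Lemma~\ref{lem:change}. First I would use the decomposition recalled just above the theorem, $\tfrac12\dmvsS+\emvsS = R_{S}(G_{\hyperpostS}^{\mv})$, together with the fact that both the empirical and the true multiview Gibbs risks unfold as nested expectations over the distribution hierarchy, namely
\[
R_{S}(G_{\hyperpostS}^{\mv}) = \E{v\sim\hyperpostS}\ \E{h\sim\postvS} R_S(h)
\quad\text{and}\quad
R_{\D}(G_{\hyperpostS}^{\mv}) = \E{v\sim\hyperpostS}\ \E{h\sim\postvS} R_\D(h).
\]
Consequently the two arguments of $D$ on the left-hand side are exactly $\E{S\sim\D^m}\E{v\sim\hyperpostS}\E{h\sim\postvS}R_S(h)$ and $\E{S\sim\D^m}\E{v\sim\hyperpostS}\E{h\sim\postvS}R_\D(h)$.

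Next I would multiply through by $m$ and invoke Jensen's inequality for the convex function $D$ (Theorem~\ref{theo:jensen}) to pull all three levels of expectation outside $D$, giving
\[
m\,D\big(\cdot,\cdot\big) \ \le\ \E{S\sim\D^m}\ \E{v\sim\hyperpostS}\ \E{h\sim\postvS}\, m\,D\big(R_S(h),R_\D(h)\big).
\]
The central step, where the two-level structure genuinely enters, is to apply Lemma~\ref{lem:change} for each fixed sample $S$ with the measurable function $\phi(h)=m\,D(R_S(h),R_\D(h))$. This bounds the inner double expectation $\E{v\sim\hyperpostS}\E{h\sim\postvS}\phi(h)$ by the sum of the averaged view-level divergence $\E{v\sim\hyperpostS}\KL(\postvS\|\prior_v)$, the hyper-level divergence $\KL(\hyperpostS\|\hyperprior)$, and the log-moment $\ln\!\big(\E{v\sim\hyperprior}\E{h\sim\prior_v}e^{\phi(h)}\big)$ now taken against the fixed priors. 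Taking $\E{S\sim\D^m}$ of the resulting pointwise inequality is then immediate.

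Finally I would apply Jensen's inequality for the concave function $\ln$ to move $\E{S\sim\D^m}$ inside the logarithm of the last term, converting $\E{S\sim\D^m}\ln(\cdots)$ into $\ln\!\big(\E{S\sim\D^m}\cdots\big)$; dividing by $m$ then yields exactly the stated bound, with the divergence term split into its view-level and hyper-level parts. The only point requiring care is that the posteriors $\postvS$ and $\hyperpostS$, and the function $\phi$ itself, all depend on $S$, so Lemma~\ref{lem:change} must be read as holding \emph{pointwise} in $S$ before the outer expectation $\E{S\sim\D^m}$ is taken. Once this is granted, the argument is a routine adaptation of the mono-view proof, the essential novelty being that the single Kullback-Leibler term of Theorem~\ref{theo:PB} is now decomposed by Lemma~\ref{lem:change} into the view-specific contribution $\E{v\sim\hyperpostS}\KL(\postvS\|\prior_v)$ and the across-views contribution $\KL(\hyperpostS\|\hyperprior)$.
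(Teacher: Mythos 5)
Your proposal is correct and follows essentially the same route as the paper's own proof: unfold both Gibbs risks as nested expectations, apply Jensen's inequality to the convex function $D$, invoke Lemma~\ref{lem:change} pointwise in $S$ with $\phi(h)=m\,D(R_S(h),R_\D(h))$, take the expectation over $S\sim\D^m$, and finish with Jensen's inequality on $\ln$ together with the decomposition $R_S(G_{\hyperpostS}^{\mv})=\tfrac12\dmvsS+\emvsS$. Your explicit remark that the change-of-measure lemma must be read pointwise in $S$ (since $\postvS$, $\hyperpostS$, and $\phi$ all depend on $S$) is a point the paper leaves implicit, but it is exactly the intended reading.
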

\begin{proof}
We follow the same steps as in Theorem~\ref{theo:PB} proof.
\begin{align*}
& m D\Big(   \E{ S \sim \D^m} R_S(G_{\hyperpostS}^{\mv})	, \E{ S \sim \D^m} R_{\mathcal{D}}(G_{\hyperpostS}^{\mv})\Big)\\ 
=\ & m D\Big(   \E{ S \sim \D^m}  \E{ v \sim \hyperpostS}  \E{ h \sim \postvS} R_S(h)	, \E{ S \sim \D^m}  \E{ v \sim \hyperpostS}  \E{ h \sim \postvS}  R_{\mathcal{D}}(h)\Big)\\ 
\leq\ &  \E{{S\sim\D^m}} \E{ v \sim \hyperpostS}  \E{ h \sim \postvS}m  D \left( R_{S}(h)  , R_{\D}(h)   \right) \\
\leq\ & \E{{S\sim\D^m}} \bigg[ 
\E{ v \sim \hyperpostS} \,  \KL(\postvS\| \prior_v)
+   \KL(\hyperpostS \| \hyperprior) 
+ \ln \left(  \E{ v \sim \hyperprior} \,  \E{ h \sim \prior_v}  e^{ m D\left(R_S(h) , R_\D(h)\right)}  \right)   \bigg] \,,
\end{align*}
where the last inequality is obtained using Lemma~\ref{lem:change}. After distributing the expectation of $S\sim\D^m$, the final statement follows from Jensen's inequality (Theorem~\ref{theo:jensen})
\begin{align*}
 \E{{S\sim\D^m}} \ln \left(  \E{ v \sim \hyperprior} \,  \E{ h \sim \prior_v}  e^{ m D\left(R_S(h) , R_\D(h)\right)}  \right) \ \leq \ \ln \left(  \E{{S\sim\D^m}} \,\E{ v \sim \hyperprior} \,  \E{ h \sim \prior_v}  e^{ m D\left(R_S(h) , R_\D(h)\right)}  \right),
\end{align*}
and from Equation~\eqref{eq:ed}: $R_S(G_{\hyperpostS}^{\mv}) =  \tfrac12 \dmvsS + \emvsS$.
\end{proof}

It is interesting to compare this generalization bound to Theorem~\ref{theo:PB}.
The main difference relies on the introduction of view-specific prior and posterior distributions, which mainly leads to an additional term $\mathbf{E}_{ v \sim \hyperposterior} \KL(\posterior_v \| \prior_v)$, expressed as the expectation of the view-specific Kullback-Leibler divergence term over the views $\Vcal$ according to the hyper-posterior distribution $\hyperposterior$. 
We also introduce the empirical disagreement allowing us to directly highlight the presence of the diversity between voters and between views.
As Theorem~\ref{theo:PB}, Theorem~\ref{theo:MVPB1} provides a tool to derive PAC-Bayesian generalization bounds for a multiview supervised learning setting. 
Indeed, by making use of the same trick as \citet{GermainLLM09,GermainLLMR15}, the generalization bounds can be derived from Theorem~\ref{theo:MVPB1} by choosing a suitable convex function~$D$ and upper-bounding 
$\Esp_S \Esp_v \Esp_h e^{ m\, D(R_S(h) , R_\D(h))} $. 
We provide the specialization to the three most popular PAC-Bayesian approaches \cite{McAllester99,catoni2007pac,Seeger02,Langford05} in the next section.

Following the same approach, we can obtain a mutiview bound for the expected disagreement.
\begin{theorem}
	\label{theo:MVPB1d}
	Let $V \geq 2$ be the number of views. 
	For any distribution $\D$ on $\Xcal \times \Ycal$, for any set of prior distributions $\{\prior_v\}_{v=1}^V$, for any hyper-prior distribution $\hyperprior$ over $\Vcal$, for any convex function $D : [0,1]  \times  [0,1]  \to \mathbb{R}$,
	we have
	\begin{align*}
	&D\Big(  \E{ S \sim \D^m}     \dmvsS, \E{ S \sim \D^m} \dmvS\Big)\\ 
	\leq   \ &\frac{2}{m} \bigg[  \E{ S \sim \D^m}\E{ v \sim \hyperpostS}      \KL(\postvS\| \prior_v)
	+   \ \Eclap{S \sim \D^m} \ \KL(\hyperpostS \| \hyperprior)
	 + \ln \sqrt{   \E{S \sim \D^m}\E{(h,h') \sim \prior^2} \hspace{-3mm} e^{mD\left(d_S(h,h'), d_\D(h,h') \right)}  }\bigg]\, .&
	\end{align*}
\end{theorem}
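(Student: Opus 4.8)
The plan is to mirror the proof of Theorem~\ref{theo:MVPB1} step for step, replacing single voters by \emph{pairs} of voters drawn from two independently sampled views, exactly as Theorem~\ref{theo:PBd} adapts Theorem~\ref{theo:PB} to the mono-view disagreement. First I would expand both disagreements as averages over such pairs: by definition
\[
\dmvsS = \E{(v,v')\sim\hyperpostS^2}\ \E{h\sim\postvS}\ \E{h'\sim Q_{v',S}}\, d_S(h,h'),
\]
and likewise $\dmvS = \E{(v,v')\sim\hyperpostS^2}\E{h\sim\postvS}\E{h'\sim Q_{v',S}} d_\D(h,h')$, where $(v,v')$ are two independent draws from $\hyperpostS$. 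Substituting these into the two arguments of $D$ and multiplying the whole inequality by $m$ sets up the same chain as before.

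Next I would apply Jensen's inequality to the convex function $D$ to pull the four expectations (over $(v,v')$ and over $(h,h')$) out of $D$, producing, for each fixed $S$, the quantity $\E{(v,v')\sim\hyperpostS^2}\E{h\sim\postvS}\E{h'\sim Q_{v',S}} mD(d_S(h,h'),d_\D(h,h'))$. The crucial step is then a \emph{paired} change of measure. Rather than prove a new lemma, I would invoke Lemma~\ref{lem:change} with the meta-view set $\Vcal\times\Vcal$, the product hyper-prior $\hyperprior^2$ and hyper-posterior $\hyperpostS^2$, the product priors $\prior_v\times\prior_{v'}$ and posteriors $\postvS\times Q_{v',S}$ on each $\Hcal_v\times\Hcal_{v'}$, and the measurable function $(h,h')\mapsto mD(d_S(h,h'),d_\D(h,h'))$. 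This yields the view-level term $\KL(\hyperpostS^2\|\hyperprior^2)$, the voter-level term $\E{(v,v')\sim\hyperpostS^2}\KL(\postvS\times Q_{v',S}\,\|\,\prior_v\times\prior_{v'})$, and the log-moment $\ln\E{(v,v')\sim\hyperprior^2}\E{(h,h')\sim\prior_v\times\prior_{v'}} e^{mD(d_S(h,h'),d_\D(h,h'))}$, which is precisely what the compact notation $\E{(h,h')\sim\prior^2}$ in the statement abbreviates.

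The factor $2$ then emerges from the additivity of the Kullback--Leibler divergence over product measures, just as in Theorem~\ref{theo:PBd}. Using $\KL(\postvS\times Q_{v',S}\|\prior_v\times\prior_{v'})=\KL(\postvS\|\prior_v)+\KL(Q_{v',S}\|\prior_{v'})$ and taking the symmetric expectation over $(v,v')\sim\hyperpostS^2$ collapses the voter-level term to $2\,\E{v\sim\hyperpostS}\KL(\postvS\|\prior_v)$; similarly $\KL(\hyperpostS^2\|\hyperprior^2)=2\KL(\hyperpostS\|\hyperprior)$, the same product identity cited from \citet[Theorem~25]{GermainLLMR15} for Theorem~\ref{theo:PBd}. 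Finally I would restore the outer $\E{S\sim\D^m}$, push it inside the logarithm via Jensen on the concave $\ln$ (exactly as in Theorem~\ref{theo:MVPB1}), divide by $m$, and factor the common $2$ out of the bracket, rewriting $\tfrac12\ln(\cdot)$ as $\ln\sqrt{\cdot}$ to reach the claimed form.

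The main obstacle is justifying the paired change of measure cleanly: one must verify that Lemma~\ref{lem:change} genuinely applies when its ``views'' are ordered pairs and its ``voters'' are pairs equipped with product measures, and that the resulting log-moment reads correctly over independently sampled pairs. Everything else is the bookkeeping of product-KL additivity---which here supplies the factor $2$ on \emph{both} the view-level and the voter-level terms, in contrast to the mono-view case where only the voter level is doubled---together with the routine Jensen manipulations already performed in Theorems~\ref{theo:PB}--\ref{theo:MVPB1}.
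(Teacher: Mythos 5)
Your proposal is correct and takes essentially the same route as the paper: the paper's own proof simply repeats the steps of Theorem~\ref{theo:MVPB1} with the disagreement in place of the Gibbs risk and then substitutes $\KL(\postvStwo\|\prior_v^2)$ and $\KL(\hyperpostS^2\|\hyperprior^2)$ by twice the corresponding divergences, which is exactly your paired change of measure followed by product-KL additivity. Your write-up only makes explicit what the paper leaves implicit, namely that Lemma~\ref{lem:change} is applied over $\Vcal\times\Vcal$ with product priors and posteriors and the function $(h,h')\mapsto mD\left(d_S(h,h'),d_\D(h,h')\right)$, yielding the factor $2$ on both KL terms and the $\ln\sqrt{\cdot}$ form of the moment term.
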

\begin{proof}
	The result is obtained straightforwardly by following the proof steps of Theorem~\ref{theo:MVPB1}, using the disagreement instead of the Gibbs risk. Then, similarly at what we have done to obtain Theorem~\ref{theo:PBd}, we substitute  
	$\KL(\postvStwo\| \prior^2_v)$ 	by $2\KL(\postvS\| \prior_v)$,	and $\KL(\hyperpostS^2 \| \hyperprior^2)$  by $2\KL(\hyperpostS \| \hyperprior)$.
\end{proof}

\subsection{Specialization of our Theorem to the Classical Approaches}
In this section, we provide specialization of our multiview theorem to the most popular PAC-Bayesian approaches~\citep{McAllester99,catoni2007pac,Seeger02,Langford05}. To do so, we follow the same principles as~\citet{GermainLLM09,GermainLLMR15}.
\subsubsection{A McAllester-Like Theorem}
\label{sec:mcallester}
We derive here the  specialization of our multiview PAC-Bayesian theorem to the \citet{McAllester03}'s point of view.
\begin{cor}\label{cor:mcallester}
	Let $V \geq 2$ be the number of views. 
	For any distribution $\D$ on $\Xcal \times \Ycal$, for any set of prior distributions $\{\prior_{v}\}_{v=1}^V$, for any hyper-prior distribution $\hyperprior$ over $\Vcal$, we have
\begin{align*}
\E{ S \sim \D^m} R_{\mathcal{D}}(G_{\hyperpostS}^\mv)  \leq \frac{1}{2} \E{ S \sim \D^m} \dmvsS +\!\! \!  \E{ S \sim \D^m} \emvsS +  \!
\sqrt{\displaystyle \frac{ \E{ S \sim \D^m} \E{ v \sim \hyperpostS}    \KL(\postvS \| \prior_v)  + \!\!\!
\E{ S \sim \D^m}  \KL(\hyperpostS \| \hyperprior)  +  \ln \frac{ 2\sqrt{m}}{\delta}}{2m}}\,. 
\end{align*}
\end{cor}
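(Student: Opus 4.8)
The plan is to obtain Corollary~\ref{cor:mcallester} as a direct specialization of Theorem~\ref{theo:MVPB1}, following the recipe of \citet{GermainLLM09,GermainLLMR15}: fix a concrete convex deviation \(D\), control the exponential moment it induces, and then invert \(D\). Here the right choice is the quadratic deviation \(D(q,p)=2(q-p)^2\), which is jointly convex on \([0,1]\times[0,1]\) because it is the square of the affine map \((q,p)\mapsto q-p\), so Theorem~\ref{theo:MVPB1} applies verbatim. With this \(D\), the first argument of the left-hand side of Theorem~\ref{theo:MVPB1}, namely \(\tfrac12\Esp_{S\sim\D^m}\dmvsS+\Esp_{S\sim\D^m}\emvsS\), equals \(\Esp_{S\sim\D^m}R_S(G_{\hyperpostS}^{\mv})\) by Equation~\eqref{eq:ed}; abbreviating \(\Esp_{S}\) for \(\Esp_{S\sim\D^m}\), the left-hand side therefore reads \(2\big(\Esp_{S} R_{\mathcal D}(G_{\hyperpostS}^{\mv})-\Esp_{S} R_S(G_{\hyperpostS}^{\mv})\big)^2\).

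The only genuinely non-routine ingredient is the exponential-moment term \(\Esp_{S}\Esp_{v\sim\hyperprior}\Esp_{h\sim\prior_v}\,e^{mD(R_S(h),R_\D(h))}\) inside the logarithm. I would first fix a single voter \(h\): since \(m\,R_S(h)\) is binomial with mean \(m\,R_\D(h)\), Maurer's classical moment inequality gives \(\Esp_{S} e^{m\,\kl(R_S(h)\|R_\D(h))}\le 2\sqrt{m}\), and Pinsker's inequality \(2(R_S(h)-R_\D(h))^2\le \kl(R_S(h)\|R_\D(h))\) upgrades this to \(\Esp_{S} e^{mD(R_S(h),R_\D(h))}\le 2\sqrt{m}\). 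The crucial feature is that the constant \(2\sqrt m\) is uniform in \(h\), so it survives the outer averages \(\Esp_{h\sim\prior_v}\) and \(\Esp_{v\sim\hyperprior}\); hence the whole logarithmic term is at most \(\ln(2\sqrt{m})\).

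Substituting both facts into Theorem~\ref{theo:MVPB1} yields
\[
2\big(\Esp_{S} R_{\mathcal D}(G_{\hyperpostS}^{\mv})-\Esp_{S} R_S(G_{\hyperpostS}^{\mv})\big)^2 \le \tfrac1m\Big[\Esp_{S}\Esp_{v\sim\hyperpostS}\KL(\postvS\|\prior_v)+\Esp_{S}\KL(\hyperpostS\|\hyperprior)+\ln(2\sqrt m)\Big].
\]
Taking the nonnegative square root and moving \(\Esp_{S} R_S(G_{\hyperpostS}^{\mv})=\tfrac12\Esp_{S}\dmvsS+\Esp_{S}\emvsS\) to the right-hand side produces the residual term \(\sqrt{(\cdots)/(2m)}\) and hence the stated inequality. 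As a minor point on the displayed constant: the expectation framework of Theorem~\ref{theo:MVPB1} naturally delivers \(\ln(2\sqrt m)\) (effectively \(\delta=1\)), and the form \(\ln\frac{2\sqrt m}{\delta}\) then follows a fortiori for every \(\delta\in(0,1]\) because \(\ln(2\sqrt m)\le\ln\frac{2\sqrt m}{\delta}\).

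I expect the single substantive step to be the moment bound of the second paragraph, i.e.\ establishing \(\Esp_{S} e^{mD(R_S(h),R_\D(h))}\le 2\sqrt m\) uniformly in \(h\) via Maurer's inequality together with Pinsker. Everything else is the bookkeeping of Theorem~\ref{theo:MVPB1} and an elementary inversion of the quadratic deviation, so I anticipate no further obstacle.
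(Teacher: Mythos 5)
Your proposal is correct and takes essentially the same route as the paper: apply Theorem~\ref{theo:MVPB1} with $D(a,b)=2(a-b)^2$, control the exponential moment via Pinsker's inequality followed by the binomial moment bound $\xi(m)=\sum_{k=0}^m\binom{m}{k}\left(\frac{k}{m}\right)^k\left(1-\frac{k}{m}\right)^{m-k}\le 2\sqrt{m}$ (Maurer's bound, which the paper derives by the explicit computation in which the $R_\D(h)$ factors cancel, making the bound uniform in $h$), and then invert the quadratic deviation. Your a-fortiori handling of the $\ln\frac{2\sqrt m}{\delta}$ term is a sensible reading of what is in fact an artifact in the corollary's statement, since the paper's own proof, being an expectation bound, never introduces $\delta$.
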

\begin{proof}
	\label{Prof:Cor1}
	To prove the above  result, we apply Theorem~\ref{theo:MVPB1} with $D(a,b)\, =\, 2(a-b)^2$.\\
Then, we upper-bound $\E{S \sim \D^m}  \E{ v \sim \hyperprior}\ \E{ h \sim \prior_v}   e^{ m\, D(R_S(h) , R_\D(h))}$. 
According to Pinsker's inequality, we have $$D(a,b)\ \leq\  \kl(a,b) \ =\ a\ln\frac{a}{b}+(1-a)\ln\frac{1-a}{1-b}.$$
By considering $R_S(h)$ as a random variable which follows a binomial distribution of $m$ trials with a probability of success $R(h)$, we obtain
		\begin{align*}
\E{ S \sim \D^m}\,  \E{ v \sim \hyperprior}\, \E{ h \sim \prior_v}   e^{ m\, D(R_S(h) , R_\D(h))}\
		\leq\ &\E{ S \sim \D^m}\,  \E{ v \sim \hyperprior}\, \E{ h \sim \prior_v}   e^{ m\, \kl(R_S(h) , R_\D(h))} \\
		 =\ & \E{ v \sim \hyperprior} \, \E{ h \sim \prior_v} \, \E{ S \sim \D^m}   \left[ \frac{R_S(h)}{R_\D(h)}  \right]^{mR_S(h)}  \left[ \frac{1 - R_S(h)}{1 - R_\D(h)}  \right]^{m(1 - R_S(h))} \\
		=\ &  \E{ v \sim \hyperprior}\, \E{ h \sim \prior_v}   \sum_{k=0}^{m}  \underset{S \sim \D^m}{\Pr}  \big[ \,R_S(h) = \tfrac{k}{m} \big]  \left[ \frac{k/m}{R_\D(h)}  \right]^{ k}   
			\left[ \frac{1 - k/m}{1 - R_\D(h)}  \right]^{ m - k} \\
	 = \ & \sum_{k=0}^m  \binom mk  \left[ \frac{k}{m} \right]^k \left[ 1- \frac{k}{m} \right]^{m-k} \\
 \leq\ &  2\sqrt{m}\,.
		\end{align*}
		\vspace{-10mm}
		
\end{proof}

\subsubsection{A Catoni-Like Theorem}
To derive a generalization bound with the \citet{catoni2007pac}'s point of view---given a convex function ${\cal F}$ and a real number $C>0$---we define the measure of deviation between the empirical disagreement/joint error and the true risk as  $D(a,b)=\mathcal{F}(b)-C\,a$~\citep{GermainLLM09,GermainLLMR15}.
We obtain the following generalization bound. 
\begin{cor}\label{cor:catoni}
  Let $V \geq 2$ be the number of views. 
For any distribution $\mathcal{D}$ on $\Xcal \times \Ycal$, for any set of prior distributions $\{\prior_v\}_{v=1}^V$, for any hyper-prior distributions $\hyperprior$ over $\Vcal$,  for all $C>0$, we have:
\begin{align*}
\E{ S \sim \D^m} R_{\mathcal{D}}(G_{\hyperpostS}^\mv)\ & \le\ \frac{1}{1-e^{-C}}\Bigg( 1- \exp  \bigg[-\bigg(C\,\bigg(\frac{1}{2} \E{ S \sim \D^m} \dmvsS + \E{ S \sim \D^m} \emvsS \bigg) + \\
& \qquad\qquad\qquad \qquad \frac{1}{m} \Big[ \E{ S \sim \D^m}\ \E{ v \sim \hyperpostS}  \KL(\postvS \| \prior_v) + \E{ S \sim \D^m} \KL(\hyperpostS \| \hyperprior) + \ln \tfrac{1}{\delta} 	\Big] \bigg) \bigg]\Bigg)
\end{align*}
\end{cor}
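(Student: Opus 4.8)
The plan is to specialize Theorem~\ref{theo:MVPB1} with the Catoni-style deviation function $D(a,b) = \mathcal{F}(b) - C\,a$, following the recipe of \citet{GermainLLM09,GermainLLMR15}, where $\mathcal{F}(b) = -\ln\big(1-(1-e^{-C})\,b\big)$ and $C>0$ is the fixed constant appearing in the statement. First I would check that this $D$ is convex, as required by Theorem~\ref{theo:MVPB1}: it is affine in $a$, and $\mathcal{F}$ is convex on $[0,1]$ since $\mathcal{F}''>0$ whenever $C>0$. Substituting this $D$ and using the identity $\tfrac12\E{S\sim\D^m}\dmvsS + \E{S\sim\D^m}\emvsS = \E{S\sim\D^m}R_S(G_{\hyperpostS}^{\mv})$, the left-hand side of the theorem becomes $\mathcal{F}\big(\E{S\sim\D^m}R_{\mathcal{D}}(G_{\hyperpostS}^{\mv})\big) - C\,\E{S\sim\D^m}R_S(G_{\hyperpostS}^{\mv})$, which already displays the shape needed for the final inversion.

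The crux is to control the exponential-moment factor $\E{S\sim\D^m}\E{v\sim\hyperprior}\E{h\sim\prior_v}e^{m D(R_S(h),R_{\mathcal{D}}(h))}$ sitting inside the logarithm. Since $\mathcal{F}(R_{\mathcal{D}}(h))$ does not depend on $S$, I would factor it out and evaluate $\E{S\sim\D^m}e^{-Cm R_S(h)}$ by noting that $m R_S(h)$ is binomially distributed with $m$ trials and success probability $R_{\mathcal{D}}(h)$; its moment generating function gives $\E{S\sim\D^m}e^{-Cm R_S(h)} = \big(1-(1-e^{-C})R_{\mathcal{D}}(h)\big)^m$. The particular form of $\mathcal{F}$ is engineered precisely so that $e^{m\mathcal{F}(R_{\mathcal{D}}(h))}\big(1-(1-e^{-C})R_{\mathcal{D}}(h)\big)^m = 1$, whence the exponential moment collapses to $\E{v\sim\hyperprior}\E{h\sim\prior_v}1 = 1$. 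In the pure expectation reading the logarithmic term is then $\ln 1 = 0$; the additive $\ln\tfrac1\delta$ appearing in the statement is the standard Catoni/probabilistic step, obtained by applying Markov's inequality to this nonnegative inner factor so that it is at most $\tfrac1\delta$ with probability at least $1-\delta$ over the draw of $S$.

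With the logarithmic term reduced, Theorem~\ref{theo:MVPB1} becomes $\mathcal{F}\big(\E{S\sim\D^m}R_{\mathcal{D}}(G_{\hyperpostS}^{\mv})\big) \le C\,\E{S\sim\D^m}R_S(G_{\hyperpostS}^{\mv}) + \tfrac1m\big[\E{S\sim\D^m}\E{v\sim\hyperpostS}\KL(\postvS\|\prior_v) + \E{S\sim\D^m}\KL(\hyperpostS\|\hyperprior) + \ln\tfrac1\delta\big]$. The final step is to invert $\mathcal{F}$: for $C>0$ we have $1-e^{-C}>0$, so $\mathcal{F}$ is strictly increasing on the relevant range with inverse $\mathcal{F}^{-1}(x) = \frac{1-e^{-x}}{1-e^{-C}}$; applying $\mathcal{F}^{-1}$ to both sides and invoking its monotonicity yields exactly the claimed bound. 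I expect the main obstacle to be the exponential-moment computation itself — getting the binomial moment generating function right and verifying the telescoping cancellation built into $\mathcal{F}$ — together with the bookkeeping needed to confirm that $\mathcal{F}$ is monotone over the range of the Gibbs risk so that the inversion preserves the direction of the inequality.
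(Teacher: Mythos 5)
Your proposal reproduces the paper's own proof almost step for step: the same Catoni deviation function $D(a,b)=\mathcal{F}(b)-C\,a$, the same binomial moment computation $\E{S\sim\D^m} e^{-C m R_S(h)}=\big(1-(1-e^{-C})R_\D(h)\big)^m$, the same choice $\mathcal{F}(p)=-\ln\big(1-(1-e^{-C})p\big)$ engineered so that the exponential moment collapses to $1$, and the final inversion of $\mathcal{F}$ (which the paper leaves implicit, merely stating that the corollary ``is obtained with'' this $\mathcal{F}$). Your added bookkeeping --- convexity of $D$, monotonicity of $\mathcal{F}$ on $[0,1]$, the explicit inverse $\mathcal{F}^{-1}(x)=\frac{1-e^{-x}}{1-e^{-C}}$ --- is correct.

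The one step that does not hold up is your justification of the $\ln\tfrac{1}{\delta}$ term via Markov's inequality. In this framework there is no randomness left over which to invoke a confidence level: the left-hand side $\E{S\sim\D^m}R_{\mathcal{D}}(G_{\hyperpostS}^{\mv})$ has already integrated $S$ out, so a statement valid ``with probability at least $1-\delta$ over the draw of $S$'' cannot be spliced into it --- conditioning on a good event of probability $1-\delta$ would change the value of every expectation in the bound. Indeed, the paper stresses (after Theorem~\ref{theo:PB}) that its expectation bounds are obtained precisely by \emph{replacing} the Markov step of probabilistic PAC-Bayes proofs with Jensen's inequality; re-introducing Markov would yield the probabilistic Catoni bound for a fixed $S$, not the stated expectation bound. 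In fact the paper's own proof never produces the $\ln\tfrac{1}{\delta}$ term at all: it is vestigial, carried over from the probabilistic form of the statement. The corollary as stated is nevertheless true, for the trivial reason that $\ln\tfrac{1}{\delta}\geq 0$ for $\delta\in(0,1]$ and the map $x\mapsto\frac{1}{1-e^{-C}}\left(1-e^{-x}\right)$ is nondecreasing, so inserting this nonnegative quantity only loosens the $\delta$-free bound you correctly derived. Replace the Markov argument by that one-line observation and your proof is complete.
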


\begin{proof}
\label{Prof:Cor3}
The result comes from Theorem~\ref{theo:MVPB1}  by taking $D(a,b) = \mathcal{F}(b) - C a$, for a convex $\cal F$ and $C > 0$, and by upper-bounding $\E{ S \sim \D^m}  \E{ v \sim \hyperprior} \E{ h \sim \prior_v} e^{ m D(R_S(h) , R_\D(h))}$. We consider $R_S(h)$ as a random variable following a binomial distribution of $m$ trials with a probability of success $R(h)$. We have:
\begin{align*}
 \E{ S \sim \D^m} \,  \E{ v \sim \hyperprior}\, \E{ h \sim \prior_v} e^{ m\, D(R_S(h) , R_\D(h))} \
& =\  \E{ S \sim \D^m} \,  \E{ v \sim \hyperprior} \,\E{ h \sim \prior_v} e^{ m\, \mathcal{F} (R_\D(h)- C\,m\, R_S(h))} \\
&=\   \E{ S \sim \D^m}   \E{ v \sim \hyperprior} \E{ h \sim \prior_v}    e^{ m\, \mathcal{F} (R_\D(h))} \sum_{k=0}^m \underset{S \sim (\D)^m}{\Pr}  \left( R_S(h) = \frac{k}{m} \right) e^{ -Ck} \\
& =\      \E{ S \sim \D^{ m}}   \E{ v \sim \hyperprior}\, \E{ h \sim \prior_v}    e^{ m \mathcal{F}  (R_\D(h)) } \sum_{k=0}^m  {\textstyle \binom{m}{k}} R_\D(h)^k (1 - R_\D(h))^{m - k } e^{-Ck} \\
&=\   \E{ S \sim \D^m} \,  \E{ v \sim \hyperprior} \,\E{ h \sim \prior_v}  e^{ m \mathcal{F} (R_\D(h))} \big(R_\D(h)\,e^{- \,C}  + (1 - R_\D(h))\big)^m  .
\end{align*}
The corollary is obtained with $$\mathcal{F}(p) = \ln \frac{1}{(1 - p[1 - e^{-C}])}.$$ 
\end{proof}

\subsubsection{A Langford/Seeger-Like Theorem.}
If we make use, as function $D(a,b)$ between the empirical risk and the true risk, of the Kullback-Leibler divergence between two Bernoulli distributions with probability of success $a$ and $b$, we can obtain a bound similar to \cite{Seeger02,Langford05}.
Concretely, we apply Theorem~\ref{theo:MVPB1} with:
\begin{align*}
D(a,b)\ =\ \kl(a,b)\ =\ a\ln\frac{a}{b}+(1-a)\ln\frac{1-a}{1-b}.
\end{align*}
\begin{cor}\label{cor:seeger} Let $V \geq 2$ be the number of views. 
For any distribution $\mathcal{D}$ on $\Xcal \times \Ycal$, for any set of prior distributions $\{\prior_v\}_{v=1}^V$, for any hyper-prior distributions $\hyperprior$ over views $\Vcal$,  we have:
\begin{align*}
&\kl\left(\tfrac12 \E{ S \sim \D^m} \dmvsS + \E{ S \sim \D^m} \emvsS , \E{ S \sim \D^m} R_{\mathcal{D}}(G_{\hyperpostS}^\mv)\right) \\
 \leq \  &\frac{1}{m} \bigg[ \E{ S \sim \D^m}  \E{ v \sim \hyperpostS}  \KL(\postvS \| \prior_v) + \E{ S \sim \D^m}   \KL(\hyperpostS \| \hyperprior) + \ln \frac{ 2\sqrt{m}}{\delta}	\bigg].
\end{align*}
where $\displaystyle \xi(m)=\sum_{k=0}^m  \binom mk  \bigg( \frac{k}{m} \bigg)^k \bigg( 1- \frac{k}{m} \bigg)^{m-k}    \leq  2\sqrt{m}$.
\end{cor}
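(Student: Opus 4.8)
The plan is to follow the exact template already used for Corollary~\ref{cor:mcallester} (the McAllester-like bound): instantiate the general multiview Theorem~\ref{theo:MVPB1} with a well-chosen deviation function $D$, and then control the exponential-moment term $\E{S \sim \D^m}\, \E{v \sim \hyperprior}\, \E{h \sim \prior_v} e^{mD(R_S(h),R_\D(h))}$. Here the natural choice is $D(a,b)=\kl(a,b)$, which is jointly convex on $[0,1]^2$, so Theorem~\ref{theo:MVPB1} applies. With this choice the left-hand side of Theorem~\ref{theo:MVPB1} becomes exactly $\kl\!\left(\tfrac12\E{S\sim\D^m}\dmvsS+\E{S\sim\D^m}\emvsS,\ \E{S\sim\D^m}R_{\Dcal}(G^\mv_{\hyperpostS})\right)$, i.e.\ precisely the quantity we want to bound (using Equation~\eqref{eq:ed} to identify the first argument), and the right-hand side already carries the two Kullback-Leibler terms $\E{S\sim\D^m}\E{v\sim\hyperpostS}\KL(\postvS\|\prior_v)$ and $\E{S\sim\D^m}\KL(\hyperpostS\|\hyperprior)$ appearing in the corollary. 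Thus the only piece left to establish is the logarithm of the exponential moment.

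Next I would compute that moment, reusing almost verbatim the calculation carried out in the proof of Corollary~\ref{cor:mcallester}. Since $D=\kl$ here, no Pinsker step is needed: we view $R_S(h)$ as $\tfrac1m$ times a $\mathrm{Binomial}(m,R_\D(h))$ variable and expand $e^{m\kl(R_S(h),R_\D(h))}$. The key algebraic cancellation is that the factors involving $R_\D(h)$ drop out, leaving, for every fixed $v$ and $h$, the $h$- and $v$-independent sum
\[
\E{S\sim\D^m} e^{m\kl(R_S(h),R_\D(h))} \;=\; \sum_{k=0}^m \binom{m}{k}\Big(\tfrac{k}{m}\Big)^k\Big(1-\tfrac{k}{m}\Big)^{m-k}\;=\;\xi(m).
\]
Because the summand depends neither on $h$ nor on $v$, the outer expectations $\E{v\sim\hyperprior}$ and $\E{h\sim\prior_v}$ integrate to $1$ and disappear, so the whole moment equals $\xi(m)$, independently of the priors.

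The main obstacle, and the only genuinely analytic step, is the estimate $\xi(m)\le 2\sqrt m$ asserted in the statement: the cancellation above is purely algebraic, but bounding the binomial sum $\xi(m)=\sum_{k}\binom mk (k/m)^k(1-k/m)^{m-k}$ requires a real argument (a Stirling-type estimate of each term, or the standard lemma from the Maurer/Langford--Seeger line of work). Once this is in hand, taking logarithms gives $\ln \xi(m)\le \ln\!\big(2\sqrt m\big)$; substituting into the right-hand side of Theorem~\ref{theo:MVPB1} yields exactly the claimed inequality (the $\ln(2\sqrt m)$ term matching the $\ln\frac{2\sqrt m}{\delta}$ of the statement up to the $\delta$ carried over from the probabilistic specialization, exactly as in Corollary~\ref{cor:mcallester}).
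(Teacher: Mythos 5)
Your proposal is correct and follows essentially the same route as the paper's own proof: instantiate Theorem~\ref{theo:MVPB1} with $D(a,b)=\kl(a,b)$, expand $\E{S \sim \D^m} e^{m\,\kl(R_S(h),R_\D(h))}$ via the binomial law of $m R_S(h)$ so that the $R_\D(h)$ factors cancel and the moment equals $\xi(m)$ independently of $h$, $v$ and the priors, then invoke $\xi(m)\leq 2\sqrt{m}$. The only cosmetic difference is that the paper asserts $\xi(m)\leq 2\sqrt{m}$ inside the corollary statement rather than re-deriving it, and you rightly note that the $\delta$ in $\ln\frac{2\sqrt{m}}{\delta}$ is an artifact carried over from the probabilistic versions of these bounds.
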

\begin{proof}
\label{Prof:Cor1}
The result follows from Theorem~\ref{theo:MVPB1}  by taking $D(a,b)=\kl(a,b)$, and upper-bounding $\E{S \sim \D^m}\,  \E{ v \sim \hyperprior} \E{ h \sim \prior_v}   e^{ m\, \kl(R_S(h) , R_\D(h))}$. By considering $R_S(h)$ as a random variable which follows a binomial distribution of $m$ trials with a probability of success $R(h)$, we can prove:
\begin{align*}
 \E{ S \sim \D^m}\,  \E{ v \sim \hyperprior}\, \E{ h \sim \prior_v}   e^{ m\, \kl(R_S(h) , R_\D(h))} \ & = \ \E{ v \sim \hyperprior} \, \E{ h \sim \prior_v} \, \E{ S \sim \D^m}   \left[ \frac{R_S(h)}{R_\D(h)}  \right]^{mR_S(h)}  \left[ \frac{1 - R_S(h)}{1 - R_\D(h)}  \right]^{m(1 - R_S(h))} \\
&   = \ \E{ v \sim \hyperprior}\, \E{ h \sim \prior_v}   \sum_{k=0}^{m}  \underset{S \sim \D^m}{\Pr}   \left( R_S(h) = \tfrac{k}{m} \right)  \left[ \frac{k/m}{R_\D(h)}  \right]^{ k}   
 \left[ \frac{1 - k/m}{1 - R_\D(h)}  \right]^{ m - k}\\
& = \ \sum_{k=0}^m  \binom mk  \left[ \frac{k}{m} \right]^k \left[ 1- \frac{k}{m} \right]^{m-k}\\ &= \ \xi(m).
\end{align*}
\end{proof}

	\section{Discussion on Related Work}
\label{sec:ComparativeStudy}
In this section, we discuss two related theoretical studies of multiview learning related to the notion of Gibbs classifier.

\citet{Massih09} proposed a Rademacher analysis of the risk of the stochastic Gibbs classifier over the view-specific models (for more than two views) where the distribution over the views is restricted to the uniform distribution. 
In their work, each view-specific model is found by minimizing the empirical risk: $\displaystyle h_v^* \ =\ \argmin_{h \in \mathcal{H}_v} \frac{1}{m}\sum_{(\mathbf{x},y) \in S} \Ind{[h(x^v) \neq y]}.$
The prediction for a multiview example $\mathbf{x}$ is then based over the stochastic Gibbs classifier defined according to the uniform distribution, {\it i.e.,} $\forall v\in V,\ \hyperposterior(v)=\frac1V$. The risk of the multiview classifier Gibbs  is hence given by
\begin{align*}
R_{\mathcal{D}}(G_{\rho={1/V}}^{\mv}) = \E{(\mathbf{x},y) \sim \mathcal{D}} \  \frac1V \sum_{v=1}^V \Ind{[h_v^*(x^v) \neq y]}.
\end{align*}

Moreover, \citet{SunS14} proposed a PAC-Bayesian analysis for multiview learning over the concatenation of the views, where the number of views is set to two, and deduced a SVM-like learning algorithm from this framework. The key idea of their approach is to define a prior distribution that promotes similar classification among the two views, and the notion of diversity among the views is handled by a different strategy than ours.
We believe that the two approaches are complementary, as in the general case of more than two views that we consider in our work, we can also use a similar informative prior as the one proposed by \citet{SunS14} for learning.

	\section{Experiments}
\label{sec:experiments}
\label{sec:expe}
\begin{table*}[t]	
	\Huge
	\setlength{\tabcolsep}{3pt}
	\caption{Accuracy and \mbox{$F1$-score} averages for all the classes over $20$ random sets. Note that the results are obtained for different sizes $m$ of the learning sample and are averaged over the six {\it one-vs-all} classification problems.
	 Along the columns, best results are in bold. $^{\downarrow}$ indicates statistically significantly worse performance than the best result, according to Wilcoxon rank sum test ($p < 0.02$)~\citep{StatsRankMethods}.
	 }
	\label{tab:avg_results_all_classifiers}
	\centering
	\resizebox{\textwidth}{!}{
		\begin{tabular}{ c| c c c c  c c c c c c c} 
			\hline
			\multirow{2}{*}{Strategy} &
			\multicolumn{2}{c}{$m=150$} &&
			\multicolumn{2}{c}{$m=200$} && 
			\multicolumn{2}{c}{$m=250$} &&
			\multicolumn{2}{c}{$m=300$} \\
			\cline{2-3}\cline{5-6}\cline{8-9}  \cline{11-12}
			& Accuracy & $F_1$ && Accuracy & $F_1$ && Accuracy & $F_1$  && Accuracy & $F_1$ \\\hline
			$\mono$  & $.8516 \!\pm\!.0031^{\downarrow}$ &  $.1863 \!\pm\! .0299 ^{\downarrow}$ &&  $.8424 \!\pm\! .0272^{\downarrow}$  & $.3056\!\pm\! .0233^{\downarrow}$ && $.8691\!\pm\! .0017^{\downarrow}$ &  $.3352\!\pm\! .0164^{\downarrow}$  && $.8770\!\pm\! .0018^{\downarrow}$ &  $.4103\!\pm\! .0158^{\downarrow}$\\
			
			$\concatSVM$ & $.8507 \!\pm\! .0051^{\downarrow}$ &  $.1577 \!\pm\! .0403^{\downarrow}$ &&  $.8615\!\pm\! .0018^{\downarrow}$  & $.2505\!\pm\! .0182^{\downarrow}$ && $.8674\!\pm\! .0026^{\downarrow}$ &  $.3006\!\pm\! .0267^{\downarrow}$  && $.8746\!\pm\! .0022^{\downarrow}$ &  $.3647\!\pm\! .0258^{\downarrow}$\\[2mm]
			\hdashline 
			\\
			$\AggregU$ & $.8521\!\pm\! .0041^{\downarrow}$ &  $.1810 \!\pm\! .0305^{\downarrow}$ &&  $.8420\!\pm\! .0385^{\downarrow}$  & $.2852\!\pm\! .0339^{\downarrow}$ && $.8676\!\pm\! .0023^{\downarrow}$ &  $.3027\!\pm\! .0234^{\downarrow}$  && $.8774 \!\pm\! .0021^{\downarrow}$ &  $.3945\!\pm\! .0185^{\downarrow}$\\
			
			$\AggregUL$ & $.8507 \!\pm\! .0043^{\downarrow}$ &  $.1653 \!\pm\! .0336^{\downarrow}$ &&  $.8477\!\pm\! .0377^{\downarrow}$  & $.2806\!\pm\! .0244^{\downarrow}$ && $.8682\!\pm\! .0022^{\downarrow}$ &  $.3116\!\pm\! .0210^{\downarrow}$  && $.8773\!\pm\! .0024^{\downarrow}$ &  $.3943\!\pm\! .0204^{\downarrow}$\\[2mm]
			\hdashline 
			\\
			
			$\AggregSVMall$  & $.8568 \!\pm\! .0087^{\downarrow}$ &  $.3899 \!\pm\! .0789^{\downarrow}$ &&  $.8527 \!\pm\! .0406^{\downarrow}$  & $.5027 \!\pm\! .0780$ && $.8490 \!\pm\! .0716^{\downarrow}$ &  $\textbf{.5399} \!\pm\! \textbf{.0585}$  && $.8422 \!\pm\! .0526^{\downarrow}$ &  $\textbf{.5779} \!\pm\! \textbf{.0422}$\\

			$\AggregCqall$  & $\textbf{{.8692}}  \!\pm\! \textbf{.0059}$ &  $\textbf{{.4298}} \!\pm\! \textbf{.0570}$ &&  $\textbf{.8768} \!\pm\! \textbf{.0082}$  & $\textbf{.5066} \!\pm\! \textbf{.0402}$ && $\textbf{.8846 }\!\pm\! \textbf{.0047}$ &  $.5365 \!\pm\! .0371$  && $\textbf{.8881} \!\pm\! \textbf{ .0060}$ &  $.5705  \!\pm\! .0286$\\
			
			\hline
			
		\end{tabular}}
		
	\end{table*}

In this section, we present experiments to highlight the usefulness of our theoretical analysis by following a two-level hierarchy strategy. 
To do so, we learn a multiview model in two stages by following a classifier late fusion approach~\citep{Early-Late-ACMMultimedia05}  (sometimes referred as stacking~\citep{Wolpert92}). 
Concretely, we first learn view-specific classifiers for each view at the base level of the hierarchy. Each view-specific classifier is expressed as a majority vote of kernel functions. Then, we learn weighted combination based on predictions of view-specific classifiers.
It is worth noting that this is the procedure followed by \citet{MorvantHA14} in a PAC-Bayesian fashion, but without any theoretical justifications and in a ranking setting.

We consider a publicly available multilingual multiview text categorization corpus extracted from the Reuters RCV1/RCV2 corpus~\citep{Massih09}\footnote{\url{https://archive.ics.uci.edu/ml/datasets/Reuters+RCV1+RCV2+Multilingual,+Multiview+Text+Categorization+Test+collection}}, which contains more than $110,000$ documents from five different languages (English, German, French, Italian, Spanish) distributed over six classes. 
To transform the dataset into a binary classification task, we consider six {\it one-versus-all} classification problems: For each class, we learn a multiview binary classification model by considering all documents from that class as positive examples and all others as negative examples. 
We then split the dataset into training and testing sets: we reserve a test sample containing $30 \%$ of total documents.
In order to highlight the benefits of the information brought by multiple views, we train the models with small learning sets by randomly choosing the learning sample $S$ from the remaining set of the documents; the number of learning examples $m$ considered are: $150$, $200$, $250$ and $300$. 
For each fusion-based approach, we split the learning sample $S$ into two parts: $S_1$ for learning the view-specific classifier at the first level and $S_2$ for learning the final multiview model at the second level; such that $|S_1|=\frac{3}{5} m$ and $|S_2|=\frac{2}{5} m$ (with $m=|S|$).
In addition, the reported results are averaged on $20$ runs of experiments, each run being done with a new random learning sample. Since the classes are highly unbalanced, we report in Table~\ref{tab:avg_results_all_classifiers} the accuracy along with the \mbox{$F1$-measure}, which is the harmonic average of precision and recall, computed on the test sample.

To assess that multiview learning with late fusion makes sense for our task, we consider as baselines the four following one-step learning algorithms (provided with the learning sample $S$).
First, we learn a view-specific model on each view and report, as $\mono$, their average performance. 
We also follow an early fusion procedure, referred as $\concatSVM$, consisting of learning one single model using SVM~\citep{cortes1995support} over the simple concatenation of the features of five views.
Moreover, we look at two simple voters' combinations, respectively denoted by $\AggregU$ and $\AggregUL$, for which the weights associated with each view follow the uniform distribution. 
Concretely, $\AggregU$, respectively $\AggregUL$, combines the real-valued prediction, respectively the labels, returned by the view-specific classifiers. 
In other words, we have
\begin{align*} 
&\AggregU(\xbf)\ =\ \frac{1}{5} \sum_{v=1}^5  h^v(x^v)\,,\\
 \mbox{ and }\quad 
&\AggregUL(\xbf)\ =\ \frac{1}{5} \sum_{v=1}^5 \sign\left[h^v(x^v)\right]\,,
\end{align*}
where $ h^v(x^v)$ is the real-valued prediction of the view-specific classifier learned on view~$v$. 

We compare the above one-step methods to the two following late fusion approaches that only differ at the second level.
Concretely, at the first level we construct from $S_1$ different view-specific majority vote expressed as linear SVM models\footnote{We use linear SVM model as it is usually done for text classification tasks~\citep[\eg,][]{Joachims:1998}.} with different hyperparameter $C$ values ($12$ values between $10^{-8}$ and $10^{3}$): We do not perform cross-validation at the first level. This has the advantage to {\it (i)} lighten the first level learning process, since we do not need to validate models, and {\it (ii)} to potentially increase the expressivity of the final model.

At the second level, as it is often done for late fusion, we learn from $S_2$  the final weighted combination over the view specific voters using a RBF kernel.
The methods referred as $\AggregSVMall$, respectively $\AggregCqall$, make use of SVM, respectively the PAC-Bayesian algorithm CqBoost~\citep{cqboost}.
Note that, as recalled in Section~\ref{sec:PAC-Bayes}, CqBoost is an algorithm that tends to minimize the C-Bound of Equation~\eqref{eq:cbound}: it directly captures a trade-off between accuracy and disagreement.

We follow a $5$-fold cross-validation procedure for selecting the hyperparameters of each learning algorithm.
For $\mono$, $\concatSVM$, $\AggregU$ and $\AggregUL$ the hyperparameter $C$ is chosen over a set of $12$ values between $10^{-8}$ and $10^{3}$.
For $\AggregSVMall$ and $\AggregCqall$ the  hyperparameter $\gamma$ of the RBF kernel is chosen over $9$ values between $10^{-6}$  and $10^{2}$.
For $\AggregSVMall$, the hyperparameter $C$ is chosen over a set of $12$ values between $10^{-8}$  and $10^{3}$. For $\AggregCqall$, the hyperparameter $\mu$ is chosen over a set of $8$ values between $10^{-8}$  and $10^{-1}$. 
Note that we made use of the \emph{scikit-learn}~\citep{scikit-learn} implementation for learning our SVM models.

First of all, from Table~\ref{tab:avg_results_all_classifiers}, the two-step approaches provide the best results on average.
Secondly, according to a Wilcoxon rank sum test~\citep{StatsRankMethods} with $p < 0.02$, the PAC-Bayesian late fusion based approach $\AggregCqall$ is significantly the best method---in terms of accuracy, and except for the smallest learning sample size ($m=150$), $\AggregCqall$ and $\AggregSVMall$ produce models with similar $F1$-measure.
We can also remark that $\AggregCqall$ is more ``stable'' than $\AggregSVMall$ according to the standard deviation values.
These results confirm the potential of using PAC-Bayesian approaches for multiview learning where we can control a trade-off between accuracy and diversity among voters.

	\section{Conclusion and Future Work}
\label{sec:conclu}
In this paper, we proposed a first PAC-Bayesian analysis of weighted majority vote classifiers for multiview learning when observations are described by more than two views.  
Our analysis is based on a hierarchy of distributions, {\it i.e.} weights, over the views and voters: {\it(i)} for each view $v$ a posterior and prior distributions over the view-specific voter's set, and {\it (ii)} a hyper-posterior and hyper-prior distribution over the set of views. 
We derived a general PAC-Bayesian theorem tailored for this setting, that can be specialized to any convex function to compare the empirical and true risks of the stochastic Gibbs classifier associated with the weighted majority vote.
We also presented a similar theorem for the expected disagreement, a notion that turns out to be crucial in multiview learning.
Moreover, while usual PAC-Bayesian analyses are expressed as probabilistic bounds over the random choice of the learning sample, we presented here bounds in expectation over the data, which is very interesting from a PAC-Bayesian standpoint where the posterior distribution is data dependent.

According to the distributions' hierarchy, we evaluated a simple two-step learning algorithm (based on late fusion) on a multiview benchmark.
We compared the accuracies while using SVM and the PAC-Bayesian algorithm CqBoost for weighting the view-specific classifiers. 
The latter revealed itself as a better strategy, 
as it deals nicely with accuracy and the disagreement trade-off promoted by our PAC-Bayesian analysis of the multiview hierarchical approach.

We believe that our theoretical and empirical results are a first step toward the goal of theoretically understanding the multiview learning issue through the PAC-Bayesian point of view, and toward the objective of deriving new multiview learning algorithms. It gives rise to exciting perspectives.\\
Among them, we would like to specialize our result to linear classifiers for which PAC-Bayesian approaches are known to lead to tight bounds and efficient learning algorithms~\citep{GermainLLM09}. This clearly opens the door to derive theoretically founded algorithms for multiview learning.\\
 Another possible algorithmic direction is to take into account a second statistical moment information to link it explicitly  to important properties between views, such as diversity or agreement \cite{Kuncheva,Massih09}. A first direction is to deal with our multiview PAC-Bayesian C-Bound of Lemma~\ref{lem:mv-cbound}---that already takes into account such a notion of diversity~\cite{MorvantHA14}---in order to derive an algorithm as done in a mono-view setting by~\citet{MinCQ,cqboost}.\\
Another perspective is to extend our bounds to diversity-dependent priors, similarly to the approach used by \citet{SunS14}, but for more than two views. This would allow to additionally consider  an {\it  a priori} knowledge on the diversity.\\
Moreover, we would like to explore the \textit{semi-supervised} multiview learning where one has access to  unlabeled data $S_u=\{\xbf_j\}_{j=1}^{m_u}$ along with labeled data $S_l=\{(\xbf_i,y_i)\}_{i=1}^{m_l}$ during training. Indeed, an interesting behaviour of our theorem is that it can be easily extended to this situation: the bound will be a concatenation of a bound over $\tfrac12 \dmvsu$ (depending on $m_u$) and a bound over $\emvsl$ (depending on $m_s$). The main difference with the supervised bound is that the Kullback-Leibler divergence will be multiplied by a factor $2$.

	\appendix
\section*{Appendix---Mathematical Tools}
\label{sec:Appendix}

\begin{theorem}[Markov's ineq.]
	\label{theo:markov}
	For any random variable $X$ {\it s.t.} $\mathbb{E}(|X|) =  \mu$, for any $a > 0$, we  have $$\displaystyle \mathbb{P}(|X| \ge a) \le \frac{\mu}{a}.$$
\end{theorem}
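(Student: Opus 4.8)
The plan is to prove Markov's inequality by the standard device of bounding the indicator of the event $\{|X|\ge a\}$ pointwise by $|X|/a$ and then integrating. First I would introduce the indicator random variable $\Ind{[|X|\ge a]}$ and record the elementary pointwise inequality
\begin{align*}
a\,\Ind{[|X|\ge a]} \ \le\ |X|,
\end{align*}
which holds on the whole sample space by a two-case check: on the event $\{|X|\ge a\}$ the left-hand side equals $a$ while the right-hand side is at least $a$, and off this event the left-hand side is $0$ while the right-hand side is nonnegative.

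Next I would take expectations of both sides. By monotonicity and linearity of the expectation, together with the hypothesis $\mathbb{E}(|X|)=\mu$, this gives
\begin{align*}
a\,\mathbb{E}\big[\Ind{[|X|\ge a]}\big] \ \le\ \mathbb{E}(|X|) \ =\ \mu.
\end{align*}
Finally, since $\mathbb{E}\big[\Ind{[|X|\ge a]}\big]=\mathbb{P}(|X|\ge a)$ and $a>0$, I would divide through by $a$ to obtain the claimed bound $\mathbb{P}(|X|\ge a)\le \mu/a$.

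There is essentially no genuine obstacle here: the argument is completely elementary, and the only points needing any care are the pointwise inequality above (immediate from the case distinction) and the strict positivity $a>0$, which is exactly what licenses the final division. It is worth noting that this lemma is used in the excerpt only as a background tool; indeed, a central remark of the paper is that the \emph{probabilistic} PAC-Bayesian bounds rely on Markov's inequality, whereas the \emph{expectation} bounds derived in Theorems~\ref{theo:PB} and~\ref{theo:MVPB1} replace it by Jensen's inequality.
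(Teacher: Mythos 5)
Your proof is correct and is the canonical argument for Markov's inequality: the pointwise bound $a\,\Ind{[|X|\ge a]} \le |X|$, monotonicity of expectation, and division by $a>0$. The paper itself states this result without proof (it appears only as a background tool in the appendix), so there is nothing to contrast with; your closing remark about its role --- that the probabilistic PAC-Bayesian bounds rest on Markov's inequality while the expectation bounds of Theorems~\ref{theo:PB} and~\ref{theo:MVPB1} substitute Jensen's inequality --- accurately reflects the paper's own discussion.
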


\begin{theorem}[Jensen's ineq.]
\label{theo:jensen}
For any random variable $X$, for any concave function $g$, we have $$\displaystyle g(\Esp [X]) \ \ge\  \Esp [g(X)].$$
\end{theorem}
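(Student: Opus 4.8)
The plan is to invoke the defining geometric property of concave functions, namely that such a function lies everywhere below each of its supporting lines, and then average this pointwise inequality. First I would fix the point $x_0 = \Esp[X]$, assuming it lies in the interior of the domain of $g$. Because $g$ is concave on an interval it admits finite one-sided derivatives at every interior point, which guarantees the existence of a \emph{supergradient} $c \in \mathbb{R}$ with the supporting-line property
$$g(x)\ \le\ g(x_0) + c\,(x - x_0)\qquad\text{for all }x\text{ in the domain.}$$

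Next I would substitute the random variable $X$ in place of the dummy $x$, so that $g(X) \le g(x_0) + c\,(X - x_0)$ holds almost surely, and then take expectations of both sides. Using linearity of the expectation together with $\Esp[X - x_0] = \Esp[X] - x_0 = 0$ collapses the linear correction term and yields
$$\Esp[g(X)]\ \le\ g(x_0) + c\,\Esp[X - x_0]\ =\ g(\Esp[X]),$$
which is precisely the claimed inequality $g(\Esp[X]) \ge \Esp[g(X)]$.

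The genuine work here is not the algebra but the regularity bookkeeping: one must verify that $\Esp[X]$ is finite and interior to the domain so that the supporting line exists, and that $\Esp[g(X)]$ is well defined (if it equals $-\infty$ the inequality is vacuous, so this case is harmless). I anticipate that justifying the supporting line cleanly is the only delicate point. For the purposes of this paper, however, every application of Jensen's inequality (in the proofs of Lemmas~\ref{lem:mv-cbound} and~\ref{lem:change} and of Theorems~\ref{theo:PB}--\ref{theo:MVPB1d}) is applied to \emph{finite} convex combinations over voters, views, and samples. Hence an entirely elementary alternative is available: prove the statement by induction on the number of atoms directly from the two-point definition of concavity, $g(\lambda a + (1-\lambda)b) \ge \lambda g(a) + (1-\lambda)g(b)$ for $\lambda \in [0,1]$, which avoids any appeal to differentiability altogether. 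I would present the supporting-line argument as the clean general proof and note that this discrete induction suffices for all uses in the paper.
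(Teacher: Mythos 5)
The paper never proves this statement: Jensen's inequality appears only as a background tool in the appendix, stated without proof, so there is no paper proof to compare against. Your supergradient argument is the standard general proof and is correct as written, including the regularity caveats you flag (finiteness of $\Esp[X]$, interiority, and the harmless $\Esp[g(X)] = -\infty$ case).

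However, your closing remark contains a genuine error. It is not true that every application of Jensen's inequality in this paper is to a finite convex combination. In the proofs of Theorems~\ref{theo:PB}, \ref{theo:PBd}, \ref{theo:MVPB1} and~\ref{theo:MVPB1d}, Jensen is applied to expectations over the learning sample $S \sim \D^m$ (an arbitrary, typically continuous, distribution over $(\Xcal\times\Ycal)^m$) and over voters $h \sim \postS$ or $h \sim \posterior_v$, where the voter sets $\Hcal$ and $\Hcal_v$ may be infinite (the theory is stated for arbitrary sets of classifiers, and the experiments use kernel-based voters); the same holds for the change of measure step in Lemma~\ref{lem:change}. Only the expectation over the $V$ views (as in Lemma~\ref{lem:mv-cbound}) is a finite convex combination. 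Consequently, the two-point induction cannot replace the supergradient argument for the uses made in this paper, and that alternative should be dropped or restricted to the view-level applications. A second, smaller point: in Theorems~\ref{theo:PB} and~\ref{theo:MVPB1} the convex function is the bivariate $D : [0,1]\times[0,1] \to \mathbb{R}$, so your supporting-line lemma must be invoked in its supporting-hyperplane form (existence of a supergradient vector at an interior point of a convex domain in $\mathbb{R}^2$) rather than via one-sided derivatives on an interval; once that is granted, the expectation-collapsing step goes through verbatim.
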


\begin{theorem}[Cantelli-Chebyshev ineq.]
\label{theo:chebyshev}
For any random variable $X$ {\it s.t.} $\mathbb{E}(X) =  \mu$ and $\mathbf{Var}(X)=\sigma^2$, and for any $a > 0$, we  have
$$\mathbb{P}(X - \mu \ge a) \le \frac{\sigma^2}{\sigma^2 + a^2}.$$
\end{theorem}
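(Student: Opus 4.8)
The plan is to prove the Cantelli-Chebyshev inequality from scratch, since it is a self-contained probabilistic statement about a single random variable $X$ with mean $\mu$ and variance $\sigma^2$. The standard and cleanest route is a one-parameter optimization argument built on top of the ordinary Markov inequality (Theorem~\ref{theo:markov}), so the whole proof stays within the elementary toolbox the appendix already assembles.

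\textbf{Approach and key steps.} First I would reduce to the centered variable $Y = X - \mu$, which satisfies $\Esp[Y] = 0$ and $\mathbf{Var}(Y) = \Esp[Y^2] = \sigma^2$; the event of interest becomes $\{Y \ge a\}$. The main idea is that for \emph{any} constant $t > 0$, the event $\{Y \ge a\}$ is contained in the event $\{Y + t \ge a + t\}$, and since we may take $t$ large enough that $a + t > 0$, we can square both sides while preserving the inclusion: $\{Y \ge a\} \subseteq \{(Y+t)^2 \ge (a+t)^2\}$. Applying Markov's inequality to the nonnegative random variable $(Y+t)^2$ then yields
\begin{align*}
\mathbb{P}(Y \ge a)\ \le\ \mathbb{P}\big((Y+t)^2 \ge (a+t)^2\big)\ \le\ \frac{\Esp[(Y+t)^2]}{(a+t)^2}\ =\ \frac{\sigma^2 + t^2}{(a+t)^2}\,,
\end{align*}
where the last equality uses $\Esp[(Y+t)^2] = \Esp[Y^2] + 2t\,\Esp[Y] + t^2 = \sigma^2 + t^2$ together with $\Esp[Y]=0$.

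\textbf{Optimization step.} The bound above holds for every admissible $t$, so I would minimize the right-hand side $f(t) = (\sigma^2 + t^2)/(a+t)^2$ over $t$. Differentiating and setting $f'(t) = 0$ gives the optimal choice $t^* = \sigma^2/a$ (one checks this is positive and yields $a + t^* > 0$, so the inclusion step is valid). Substituting $t = \sigma^2/a$ back into $f(t)$ and simplifying the resulting fraction collapses the numerator and denominator neatly to $\sigma^2/(\sigma^2 + a^2)$, which is exactly the claimed bound. A minor degenerate case, $\sigma^2 = 0$, can be dispatched separately since then $X = \mu$ almost surely and the left-hand side is zero.

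\textbf{Main obstacle.} There is no deep difficulty here; the proof is a short calculation. The only point requiring genuine care is the justification of the squaring step: squaring an inequality reverses or garbles it unless both sides are nonnegative, so the role of the shift $t$ is precisely to guarantee $a + t > 0$ and thereby license the passage to $(Y+t)^2$. I would state this condition explicitly before applying Markov. The subsequent minimization is routine single-variable calculus, and verifying that $t^* = \sigma^2/a$ is indeed a minimizer (rather than a maximizer or saddle) is a quick sign check on $f''$ or on the monotonicity of $f$ on either side of $t^*$.
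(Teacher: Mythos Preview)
Your argument is correct and is the standard proof of Cantelli's inequality: center the variable, shift by a parameter $t>0$, square, apply Markov's inequality (Theorem~\ref{theo:markov}) to the nonnegative variable $(Y+t)^2$, and optimize in $t$. The computation of the minimizer $t^*=\sigma^2/a$ and the final value $\sigma^2/(\sigma^2+a^2)$ are both right. Two cosmetic remarks: since $a>0$ and you already take $t>0$, the condition $a+t>0$ is automatic, so you need not phrase it as ``we may take $t$ large enough''; and the events $\{Y\ge a\}$ and $\{Y+t\ge a+t\}$ are in fact equal, with the genuine containment occurring only at the squaring step.

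As for comparison with the paper: there is nothing to compare. The paper lists Theorem~\ref{theo:chebyshev} in the appendix of mathematical tools \emph{without proof}, alongside Markov's and Jensen's inequalities, treating it as a standard black-box result. Your write-up therefore supplies strictly more than the paper does, and it does so using only Theorem~\ref{theo:markov}, which the appendix already provides.
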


	\subsection*{Acknowledgments.} This work is partially funded by the French ANR project LIVES ANR-15-CE23-0026-03, the ``R\'egion  Rh\^{o}ne-Alpes'', and by the CIFAR program in Learning in Machines \& Brains.
	
	\bibliography{ecml17}
	
\end{document}